\def\BibTeX{{\rm B\kern-.05em{\sc i\kern-.025em b}\kern-.08em
    T\kern-.1667em\lower.7ex\hbox{E}\kern-.125emX}}
\newcommand{\vx}{\bm{x}}
\newcommand{\vX}{\bm{X}}
\newcommand{\vL}{\bm{L}}
\newcommand{\vD}{\bm{D}}
\newcommand{\vS}{\bm{S}}
\newcommand{\vv}{\bm{v}}
\newcommand{\vV}{\bm{V}}
\newcommand{\vT}{\bm{T}}
\newcommand{\vA}{\bm{A}}
\newcommand{\vB}{\bm{B}}
\newcommand{\cT}{\mathcal{T}}
\newtheorem{theorem}{Theorem}[section]
\definecolor{darkyellow}{rgb}{1,0.9,0}
\definecolor{Gray}{gray}{0.95}
\definecolor{LightCyan}{rgb}{0.88,1,1}
\begin{document}

\title{Integrating Tensor Similarity  to Enhance Clustering Performance}

\author{Hong~Peng,
Yu~Hu,
        Jiazhou~Chen,
        Haiyan~Wang,
        Yang~Li,
        and~Hongmin~Cai
\IEEEcompsocitemizethanks{\IEEEcompsocthanksitem The authors are with the Department
of Computer Science and Engineering, South China University of Technology, Guangdong,
China.\protect\\
E-mail: hmcai@scut.edu.cn
 }
\thanks{}}

\markboth{IEEE Transactions on Pattern Analysis and Machine Intelligence}%
{Cai \MakeLowercase{\textit{et al.}}}

\IEEEtitleabstractindextext{%
\begin{abstract}
The performance of most the clustering methods hinges on the used pairwise affinity, which is usually denoted by a similarity matrix. However, the pairwise similarity is notoriously known for its vulnerability of noise contamination or the imbalance in samples or features, and thus hinders accurate clustering. To tackle this issue, we propose to use information among samples to boost the clustering performance. We proved that a simplified  similarity for pairs, denoted by a fourth order tensor, equals to the Kronecker product of pairwise similarity matrices under decomposable assumption, or provide complementary information for which the pairwise similarity missed under indecomposable assumption. Then a high order similarity matrix is obtained from the tensor similarity via eigenvalue decomposition. The high order similarity capturing spatial information serves as a robust complement for the pairwise similarity. It is further integrated with the popular pairwise similarity, named by IPS2,  to boost the clustering performance. Extensive experiments demonstrated that the proposed IPS2 significantly outperformed previous similarity-based methods on real-world datasets and it was capable of handling the clustering task over under-sampled and noisy datasets.

\end{abstract}

\begin{IEEEkeywords}
Tensor Similarity, Kronecker Product, Spectral Clustering, Under-sampled, Imbalanced Dataset, Unsupervised Learning.
\end{IEEEkeywords}}

\maketitle

\IEEEdisplaynontitleabstractindextext

\IEEEpeerreviewmaketitle

\IEEEraisesectionheading{
\section{Introduction}\label{sec:introduction}}

Clustering refers to separating observed data into groups with some quantified measurements, such that objects are similar to those in the same subgroup~\cite{liu2018global,abbe2018community}. Therefore, the similarity used to measure the affinity among samples plays a crucial role to reveal the neighborhood structure of samples and further facilitates clustering algorithms~\cite{yang2004similarity,lei2015consistency}. A popular way to model a pairwise relationship of samples is by utilizing the predefined or adaptively learned measurements. For example, the $k$-means and its variants use the Gaussian kernel function to compute the pairwise similarity~\cite{kanungo2002efficient,huang2005automated}. However, the pairwise similarity induced by distance metrics is vulnerable to outliers or noise corruptions which limits its usage in practice.

Several studies propose to extract high order information rather than the popular pairwise similarity to overcome its vulnerability and thereby enhance its clustering performance\cite{wang2018searching,nguyen2020learning}. For example, \cite{li2017inhomogeneous,zhou2007learning} approximate local manifolds among multiple samples to define a high order similarity. \cite{agarwal2006higher} defines a high order similarity by introducing a hypergraph. The hypergraph leverages the local structures for multiple samples to construct a highly expressive similarity~\cite{purkait2017clustering}. Although the hypergraph clustering algorithms take the connections among multiple samples into consideration, the learned similarity mainly relies on a pairwise relationship but also inherits its drawbacks.

Early attempts show that directly utilizing high order or pairwise information seems to be problematic. To tackle these issues, we propose to incorporate the high order relations among multiple samples with the popular pairwise similarity to boost the classical clustering performance. The high order similarity among multiples is denoted by a tensor, as the pairwise similarity does by a matrix. The tensor similarity aims to provide complementary information which a pairwise similarity fails to capture. To model the high order relations among samples, we take a $4$-th order tensor as an example. As depicting the sample-to-sample relation in a matrix, we characterize the pairs-to-pairs one in such a 4-th order tensor, so that the high order relations can be naturally extracted from it. We define two types of tensor similarities, the decomposable and indecomposable ones. From the indecomposable tensor similarity, a high order pairwise similarity is derived by eigenvalue decomposition. In this way, we build a bridge relating the sample-to-sample and pair-to-pair similarities for exploring the high order information. Finally, the two types of similarities are aggregated to achieve accurate clustering performance. We conducted extensive experiments to demonstrate that incorporating the tensor high order similarity can enhance the performance of popular similarity-based methods. Thus, the proposed method achieves superior performance comparing to state-of-art algorithms.

\begin{figure*}[!htb]
   \centering
        {\includegraphics[width = 7.2 in]{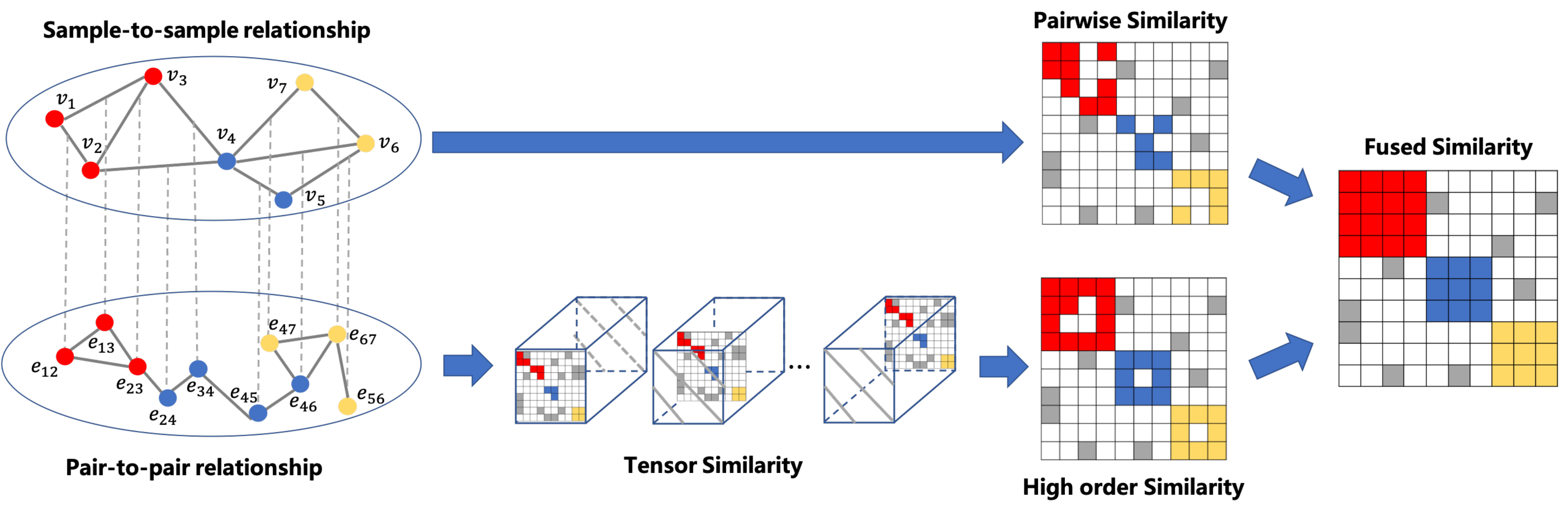}}
   \caption{The framework of the IPS2 algorithm. Given a dataset as input, IPS2 first learns both pairwise relationships and information among pairs. Then it proposes to encode pair-to-pair relationships into a tensor similarity for exploring the spatial structure of data. Based on the tensor similarity, IPS2 extracts a high order similarity, which can provide information ignored by the pairwise similarity. Afterwards, IPS2 combines the  pairwise and the high order relationships to generate the fused similarity for achieving accurate and robust clustering.}\label{fig:framework}
\end{figure*}

This paper achieves three contributions summarized as follows:
\begin{enumerate}
\item We develop a new clustering algorithm, named IPS2, by incorporating a high order similarity of data with a pairwise similarity. The high order similarity captures structural information of data which can be obtained from relationships between pairs, and it serves as complementary information for pairwise one.
\item We construct two types of tensor similarities to encode pairs' relationships and relate them to pairwise similarity. These enable us to establish a bridge between the learned high order similarity and pairwise one.
\item Extensive experiments validate the proposed IPS2 outperforms other baseline algorithms and has an advantage in the task of clustering on under-sampled and noisy datasets.
\end{enumerate}

The remainder of the paper is structured as follows: In Section~\ref{sec:related work}, we review the previous related works. Section~\ref{sec:method} describes our proposed method through formulating tensor similarity for learning a high order affinity of data. Section~\ref{Results} demonstrates the effectiveness of our approach through extensive experiments performed on synthetic and real-world datasets. Finally, we draw conclusions in Section~\ref{conclusion}.

\section{RELATED WORK}\label{sec:related work}

The performance of the classical similarity-based clustering methods depends on the accurate pairwise relationships. It is common to express the sample-to-sample similarity as a predefined distance function, and different metrics study distinct relationships\cite{mori2015similarity}. The popular spectral clustering uses a similarity computed by the Gaussian distance. Other well-known metrics, such as cosine distance and Manhattan distance, are used in different applications~\cite{Tao2017Image}. These metrics are all predefined, which makes the model hard to handle some radical cases, i.e., imbalanced or noisy datasets~\cite{su2019strong,Lin2017Clustering}. Similarity can also be constructed by using data-driven techniques\cite{Lin2014A}. The sparse subspace clustering extends similarity-based methods to enable them to handle more general situations with the data-driven technique~\cite{elhamifar2013sparse}. It utilizes the self-expressive property of data to map samples into locally linear space, within which similarity is constructed to fulfill the clustering task. The sparse subspace clustering achieves excellent clustering performance when the samples lie in a union of low dimensional subspaces but fails when this assumption is violated.

 Alternatively,  some researches attempt to utilize affinities among multiple samples rather than pairs to learn a high order similarity. For example, the hypergraph clustering proposes to characterize a high order relationship among samples, thereby enhancing the clustering performance by preserving a local manifold among samples~\cite{ghoshdastidar2017uniform,ochs2012higher}. It generated hyperedge with two or more samples and introduced a new form of high order similarity~\cite{huang2011unsupervised}. Nevertheless, it has been shown that a hypergraph could degenerate into a standard graph. Incited by this observation, recent works have developed various extensions on the hypergraph clustering~\cite{liu2010robust,jain2013efficient,li2014context}.

Different from the pairwise similarity, the hypergraph clustering models depend on the high order similarity by averaging all pairwise messages within hyperedges. Therefore, the learned high order relationship relies heavily on pairwise information. Moreover, for samples with a large or massive dimension of features, the learned similarity shows small variations due to the averaging operations by the construction of hyperedges. Therefore, it performs unsatisfactorily in such scenarios.

\section{Integrating Tensor Similarity with Pairwise Similarity (IPS2)}\label{sec:method}

In this paper, we consider incorporating a high order similarity for enhancing the clustering performance.

We present the overview of IPS2 in Fig.~\ref{fig:framework}. Given an observed dataset, one can compute the sample-to-sample affinities and further organize them as a similarity matrix. In addition to the pairwise information, we propose to exploit high order affinities among multiple samples. In this paper, we encode the similarity for two sample pairs by a fourth-order tensor, with each entry denoting the relationship of two pairs, i.e. four samples. We thereafter extract a high order similarity different from the popular pairwise similarity, and the former can provide complementary information for the latter.
We jointly learn the pairwise and high order similarities, and perform $k$-means in the combination of them, so that the fused similarity can achieve efficient results.

\subsection{Notations and Background}\label{sub:notation}

Throughout the paper, vectors are signified by bold lowercase letters and matrices are denoted by bold uppercase letters. We let scalar variables be denoted by normal symbols for clarity. For example, given an observed dataset with $m$ samples and $n$ features $ \vX = [\vx_{1},\vx_{2},\cdots,\vx_{m}]$, let $\vS\in\mathbb{R}^{m\times m}$ denote the pairwise similarity. Each entry $\vS_{i,j}$ measures the affinity between the $i$-th and $j$-th samples.  We use $\underline{n}$ to denote a set $\{1,2,\cdots,n\}$. The similarity for two pairs are encoded in a $4$-dimensional tensor $\mathcal{T}\in\mathbb{R}^{m\times m\times m\times m}$ and each entry $\mathcal{T}_{i,j,k,l}$ denotes the relationship between pair $(\vx_i,\vx_j)$ and pair $(\vx_k,\vx_l)$. The tensor could be reorganized along a direction, named by unfolding operation.
\newtheorem{definition}{\hspace{0em}Definition}[section]
\begin{definition}\label{def:unfolding}
(\textbf{Unfolding}) Let $\mathcal{T}$ be a $4$-th order $m$-dimension tensor. It can be reorganized into a $m^2$-by-$m^2$ square matrix $\hat{\textbf{T}}$  by {\it unfolding}, with  its $(k,h)$-th entry given by
\begin{center}
\begin{equation}\label{equ:unfold}
\begin{aligned}
  \hat{\textbf{T}}_{r,s}=\mathcal {T}_{i,j,k,l}
\end{aligned}
\end{equation}
\end{center}
with $r = m(j-1)+i, s = m(l-1)+k, i,j,k,l\in \underline m.$
\end{definition}

\begin{definition}\label{definition}
(\textbf{Kronecker Product}) Let $\vA\in {\mathbb{R}}^{m\times n}$, $\vB\in {\mathbb{ R}}^{p\times q}$ be two matrices. Then the Kronecker product (or tensor
product) of $\vA$ and $\vB$ is defined as the matrix
\[  \vA\otimes \vB =
\begin{bmatrix}
a_{11}\vB &\cdots & a_{1n}\vB\\
        \vdots & \ddots &\vdots\\
        a_{m1}\vB &\cdots & a_{mn}\vB
\end{bmatrix}
\in {\it R}^{mp\times nq}
\]
\end{definition}

\subsection{Decomposable Tensor Similarity is the Kronecker Product of Pairwise Similarities}\label{decomposable}

The classical similarity-based methods are achieved by defining a  sample-to-sample similarity, whether it is predefined or data-driven. Akin to the pairwise similarity, each entry of a tensor similarity captures relationships between two sample pairs. In this spirit,  it is natural to define the entry as the product of two pairwise similarities:
\begin{equation}\label{decomposable}
\begin{aligned}
\cT_{i,j,k,l} = \vS_{i,k}\vS_{j,l}, i,j,k,l \in \underline{m}.
\end{aligned}
\end{equation}
In this case, the similarity between pair  ($\vx_i,\vx_k$) and pair  ($\vx_j,\vx_l$) is directly relied on their pairwise relationships, making it convenient to establish a link between $\cT$ and $\vS$ as the following theorem.
\newtheoremstyle{mystyle}{3pt}{3pt}{\kaishu}{0cm}{\heiti2 }{}{1em}{[section]}
\begin{theorem}\label{kron_property}
We call  $\cT$ as the decomposable tensor similarity, if its unfolding satisfies $\hat {\vT} = \vS \otimes \vS$ for any pairwise similarity $\vS$.
\end{theorem}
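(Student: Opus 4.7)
The plan is to prove this by a direct entry-wise comparison, since both sides of the claimed identity $\hat{\vT}=\vS\otimes\vS$ are $m^2\times m^2$ matrices. I would first fix any two indices $r,s\in\underline{m^2}$ and use the fact that there exist unique $i,j,k,l\in\underline m$ realizing the decompositions $r=m(j-1)+i$ and $s=m(l-1)+k$ stipulated in Definition~\ref{def:unfolding}. Then I would compute both $\hat{\vT}_{r,s}$ and $(\vS\otimes\vS)_{r,s}$ in terms of $i,j,k,l$ and check that they coincide under the decomposable hypothesis \eqref{decomposable}.

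On the tensor side, the unfolding rule together with the decomposability assumption gives immediately
\begin{equation*}
\hat{\vT}_{r,s}=\cT_{i,j,k,l}=\vS_{i,k}\vS_{j,l}.
\end{equation*}
On the Kronecker side, I would invoke the block form in Definition~\ref{definition} applied to $\vA=\vB=\vS$: the matrix $\vS\otimes\vS$ has block $(j,l)$ equal to $\vS_{j,l}\vS$, and the $(i,k)$-entry inside that block is $\vS_{j,l}\vS_{i,k}$. Under the convention that the block coordinate is the outer index and the within-block coordinate is the inner index, the global row and column indices are precisely $m(j-1)+i$ and $m(l-1)+k$, so $(\vS\otimes\vS)_{r,s}=\vS_{j,l}\vS_{i,k}$. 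Matching the two expressions gives equality at the arbitrary entry $(r,s)$, hence the matrices coincide.

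The only genuinely delicate step is the bookkeeping that reconciles two indexing conventions: which of $i,j$ plays the role of block-row and which plays the role of within-block-row, and similarly for $k,l$. I would make this explicit by writing out, for small $m$ (say $m=2$), the block structure of $\vS\otimes\vS$ alongside the values $\hat{\vT}_{r,s}$ prescribed by the unfolding rule, to confirm that the pairing $(i\leftrightarrow\text{inner},\,j\leftrightarrow\text{outer})$ is consistent with the formula $r=m(j-1)+i$. Once this correspondence is pinned down, the remainder of the argument is a one-line substitution, and the theorem follows.
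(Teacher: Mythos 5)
Your proof is correct and follows essentially the same route as the paper's: a direct entry-wise (block-wise) verification that the unfolding of the decomposable tensor matches the Kronecker product $\vS\otimes\vS$. If anything, your explicit bookkeeping of which of $i,j$ (resp.\ $k,l$) is the inner versus outer index under $r=m(j-1)+i$, $s=m(l-1)+k$ is more careful than the paper's block display, and it correctly yields $\hat{\vT}_{r,s}=\vS_{i,k}\vS_{j,l}=(\vS\otimes\vS)_{r,s}$.
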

\begin{proof}
For the  unfolded tensor similarity $\cT$ defined in Eq.~(\ref{decomposable}),  we have
\begin{equation*}
\begin{aligned}
\hat {\vT} &= \begin{bmatrix}
        \cT_{1,1,1,1}&\cdots & \cT_{1,1,m,m}\\
        \vdots & \ddots &\vdots\\
        \cT_{m,m,1,1} &\cdots & \cT_{m,m,m,m}
        \end{bmatrix}\\
        &= \begin{bmatrix}
        \vS_{1,1}\vS_{1,1} &\cdots & \vS_{1,m}\vS_{1,m}\\
        \vdots & \ddots &\vdots\\
        \vS_{m,1}\vS_{m,1} &\cdots & \vS_{m,m}\vS_{m,m}\\
        \end{bmatrix}\\
         &= \begin{bmatrix}
        \vS_{1,1}\vS &\cdots & \vS_{1,m}\vS\\
        \vdots & \ddots &\vdots\\
        \vS_{m,1}\vS &\cdots & \vS_{m,m}\vS\\
        \end{bmatrix}
        \in {\it R}^{m^2\times m^2},\\
\end{aligned}
\end{equation*}
where element $\hat {\vT}_{i,j,k,l}$ is the product of the similarities of pairs $(\vx_i,\vx_j)$ and $(\vx_k,\vx_l)$. By the definition, it is easy to know that $\hat \vT = \vS\otimes \vS$ holds.
\end{proof}

This theorem builds a bridge to relate the tensor similarity to the well-studied pairwise similarity.

Given a similarity matrix $\vS\in \mathbb{R}^{m\times m}$, the classical spectral graph clustering method starts with construction of a Laplacian matrix as
$\vL = \vD^{-\frac{1}{2}}\vS \vD^{-\frac{1}{2}}$~(\textbf{Suppl. Section 1}), where $\vD$ is the vertex degree matrix and its diagonal element is computed by $\vD_{i,i}=\sum\limits_{j=1}^{m}\vS_{i,j}$. Then a clustering task is performed through obtaining the eigenvectors $\vv$ of the Laplacian matrix by solving $\vL \vv=\lambda \vv$.

Similarly, let $\cT$ be a decomposable tensor similarity with $\hat {\vT}$  being its unfolded matrix. We can compute its  Laplacian matrix $\hat{\vL}$ by $\hat{\vL}= \hat{\vD}^{-\frac{1}{2}}\hat{\vT} \hat{\vD}^{-\frac{1}{2}}$, where matrix $\hat{\vD}$ is the degree matrix of $\hat{\vT}$. We have the following claim,

\begin{theorem}\label{eigenvector}
Given a pairwise similarity matrix $\vS$ and a decomposable tensor similarity defined in Eq.~(\ref{decomposable}), let $\vL$ and $\hat{\vL}$ be their corresponding Laplacian matrices,  one has
\begin{eqnarray}
  \hat \vL = \vL\otimes \vL.
\end{eqnarray}

Moreover, let $\vv$ and $\hat\vv$ be the dominant eigenvector for the matrices $L$ and $\hat{\vL}$, respectively, one has
\begin{eqnarray}
  \hat \vv = \vv \otimes \vv.
 \end{eqnarray}
\end{theorem}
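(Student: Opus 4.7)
The plan is to reduce both identities to routine Kronecker-product manipulations, using Theorem~\ref{kron_property} as a black box for $\hat{\vT} = \vS\otimes\vS$. First I would verify the degree-matrix identity $\hat{\vD} = \vD\otimes\vD$. With the unfolding index $r = m(j-1)+i$, the $r$-th diagonal entry of $\hat{\vD}$ is
\[
\hat{\vD}_{r,r} \;=\; \sum_{k,l}\cT_{i,j,k,l} \;=\; \sum_{k,l}\vS_{i,k}\vS_{j,l} \;=\; \vD_{i,i}\vD_{j,j},
\]
which, under the same outer/inner indexing convention used in Theorem~\ref{kron_property}, matches the $r$-th diagonal entry of $\vD\otimes\vD$.

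Second, I would combine two elementary facts: since $\vD\otimes\vD$ is diagonal and positive, $(\vD\otimes\vD)^{-1/2}=\vD^{-1/2}\otimes\vD^{-1/2}$; and the mixed-product rule $(\vA\otimes\vB)(\vC\otimes\vD)=(\vA\vC)\otimes(\vB\vD)$ holds whenever the inner dimensions match. Applying these twice to the definition of $\hat{\vL}$ yields
\[
\hat{\vL} \;=\; (\vD^{-1/2}\!\otimes\vD^{-1/2})(\vS\otimes\vS)(\vD^{-1/2}\!\otimes\vD^{-1/2}) \;=\; \vL\otimes\vL,
\]
which settles the first claim.

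For the eigenvector statement, if $\vL\vv=\lambda\vv$, a further application of the mixed-product rule gives
\[
(\vL\otimes\vL)(\vv\otimes\vv) \;=\; (\vL\vv)\otimes(\vL\vv) \;=\; \lambda^{2}(\vv\otimes\vv),
\]
so $\vv\otimes\vv$ is an eigenvector of $\hat{\vL}$ at eigenvalue $\lambda^{2}$. Since the spectrum of $\vL\otimes\vL$ is the set of pairwise products $\{\lambda_i\lambda_j\}$, its largest eigenvalue equals $\lambda_{\max}(\vL)^{2}$, attained by $\vv\otimes\vv$ whenever $\vv$ is a dominant eigenvector of $\vL$.

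The computations themselves are straightforward; the only delicate point is the interpretation of "the dominant eigenvector $\hat{\vv}$". If $\lambda_{\max}(\vL)$ is a simple eigenvalue, then $\hat{\vv}=\vv\otimes\vv$ holds up to scalar rescaling and is unambiguous. If $\lambda_{\max}(\vL)$ has multiplicity greater than one, or if $\lambda_{\max}(\vL)^{2}$ happens to coincide with some other product $\lambda_i\lambda_j$ with $i\neq j$, then $\vv\otimes\vv$ merely lies in the dominant eigenspace of $\hat{\vL}$ and the equality must be read modulo a choice of basis in that eigenspace. I would therefore either state the conclusion up to such a choice, or add the mild assumption that the leading eigenvalue of $\vL$ is simple, which is the regime relevant for spectral clustering in practice.
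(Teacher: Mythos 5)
Your proof is correct and follows the route the paper itself relies on (its proof is deferred to the supplement, but the whole construction rests on the same chain: $\hat{\vD}=\vD\otimes\vD$, the mixed-product rule giving $\hat{\vL}=\vL\otimes\vL$, and the Kronecker spectral property giving $\hat{\vv}=\vv\otimes\vv$). Your closing caveat---that the identity $\hat{\vv}=\vv\otimes\vv$ requires the dominant eigenvalue of $\vL$ to be simple and $\lambda_{\max}(\vL)^{2}$ not to collide with another product $\lambda_i\lambda_j$ (e.g.\ when $\vL$ has an eigenvalue $-\lambda_{\max}$)---is a genuine precision that the theorem statement glosses over, and is worth keeping.
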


\begin{figure}
       \begin{minipage}[b]{.25\textwidth}
        \begin{flushright}
         \includegraphics[width=9cm]{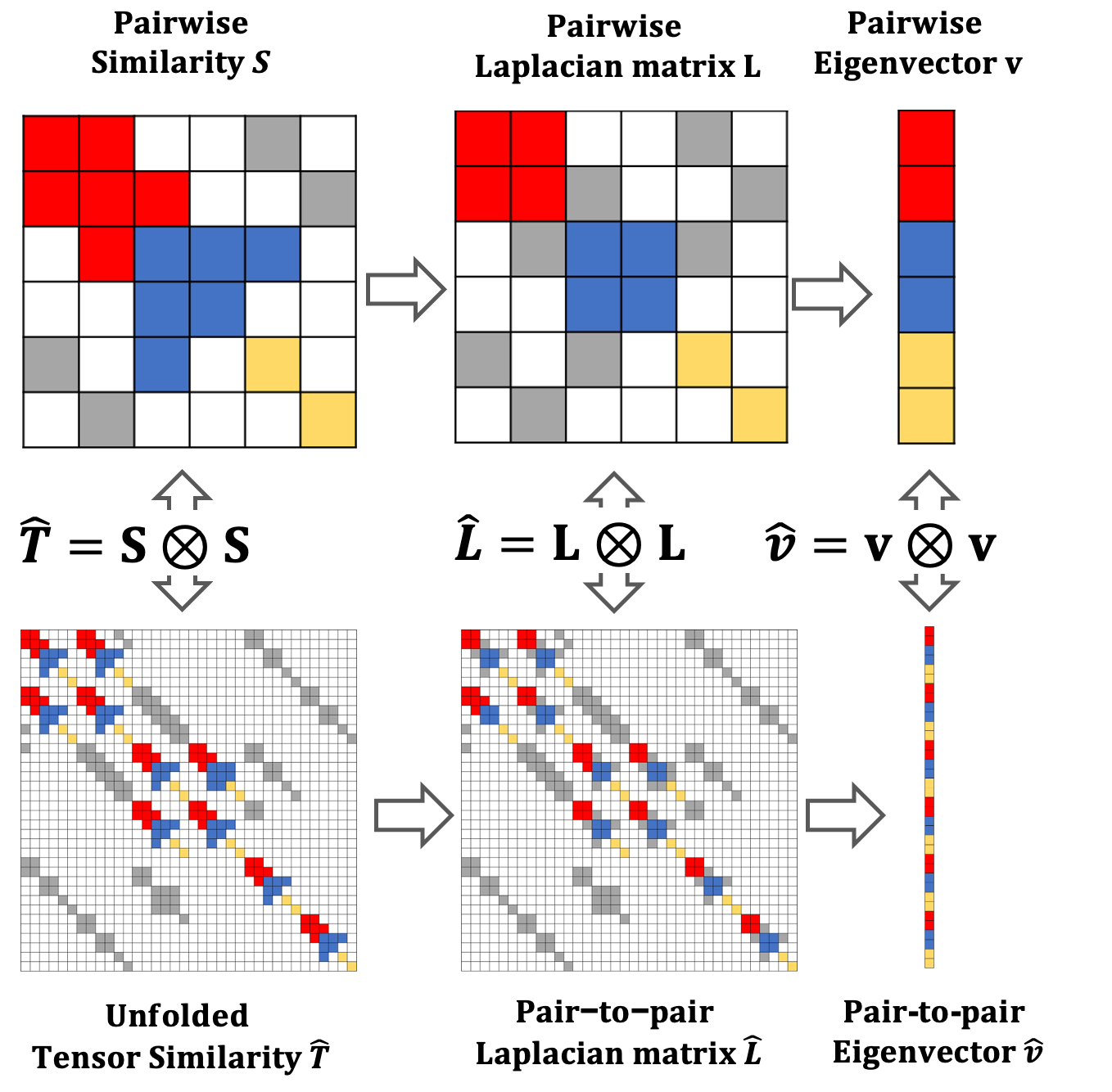}
        \end{flushright}
        \end{minipage}
   \caption{Theo.~\ref{kron_property} and Theo.~\ref{eigenvector} illustrate the connection between the pairwise similarity and the unfolded indecomposable tensor similarity.}\label{link}
\end{figure}

The detailed proof of Theo.~\ref{eigenvector} is provided in \textbf{Suppl. Section 2}. The theorem shows that the local structure extracted from pairs is determined by the Kronecker product of pairwise information of data, and pair-to-pair Laplacian matrix and its eigenvectors follow this property. In particular, as stated in literature~\cite{kumar2011co}, the pairwise eigenvector $\vv$ conveys important information about the underlying manifold of data, and clustering results can be efficiently computed based on the newly constructed similarity $\vv \vv^T$. Together with Theo.~\ref{eigenvector}, the above implies that we could construct a new high order similarity by reshaping  $\hat{\vv}$ into a $m\times m$ matrix $\vV$.

\begin{figure*}[!htb]
   \centering
   \begin{tabular}{c c c}
    \subfigure[Pairwise similarity]
        {\includegraphics[width = 2.3 in]{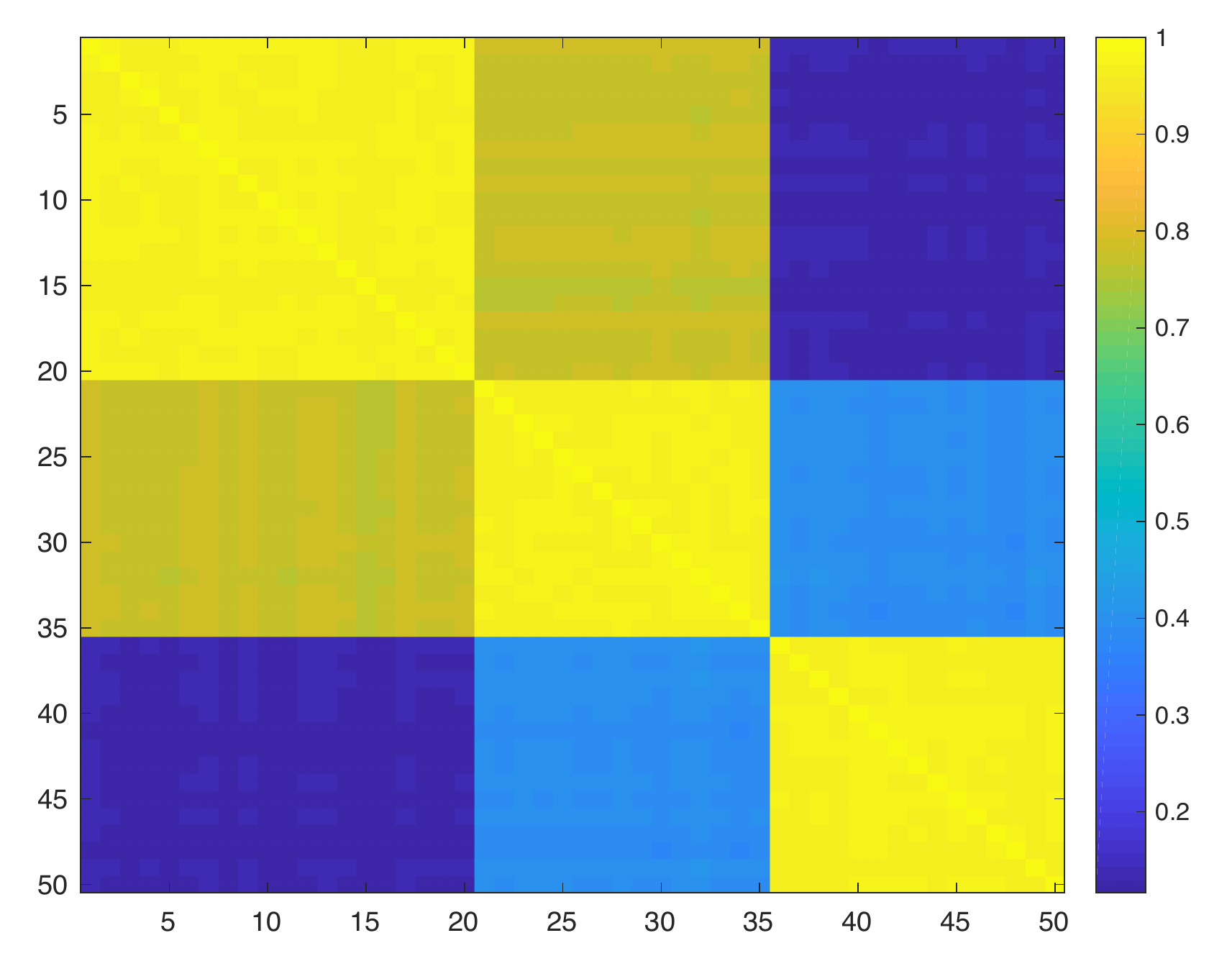}}
    \subfigure[Decomposable similarity]
        {\includegraphics[width = 2.3in]{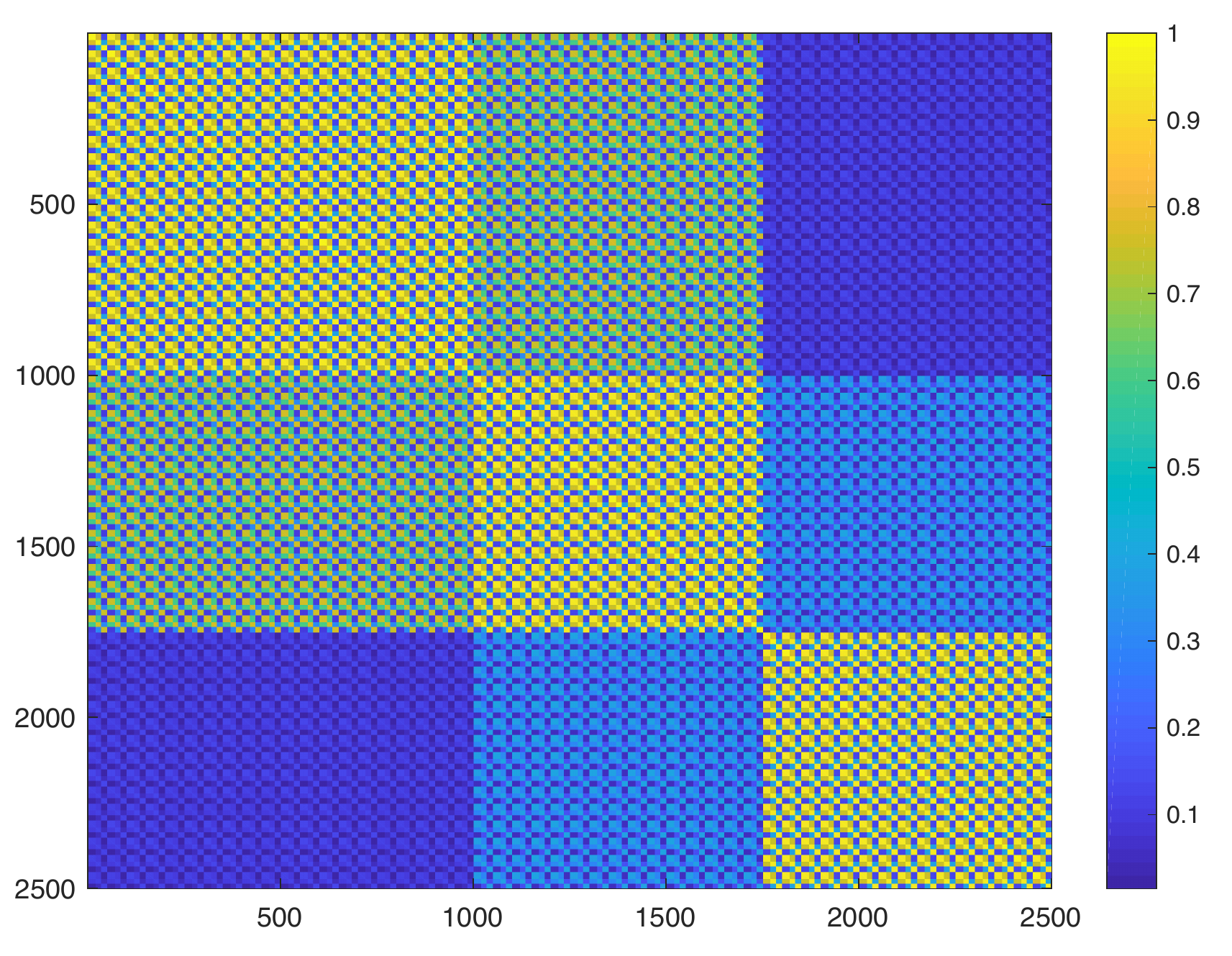}}
     \subfigure[Indecomposable similarity]
        {\includegraphics[width = 2.3 in]{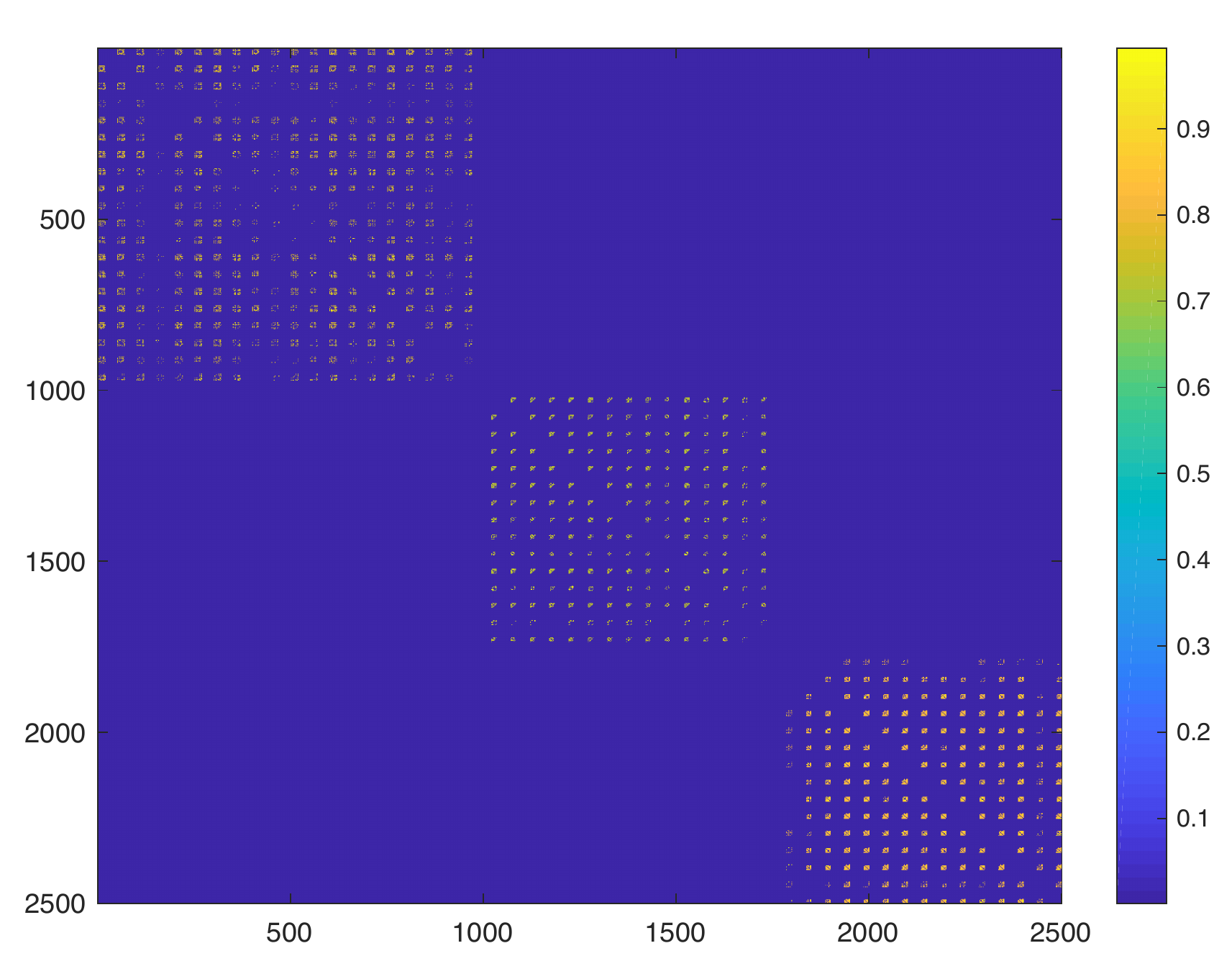}}
      \end{tabular}
   \vskip -.3cm
   \caption{The heatmaps of unfolded indecomposable tensor similarity  over a synthetic dataset. The block structures in the heatmap on   unfolded similarity are evident. This indicates that the indecomposable tensor similarity captures evident structural information than decomposable one does.}\label{fig:pairwise_decom_nondecom}\label{test_equ}
\end{figure*}

\begin{definition}\label{def:high order}
(\textbf{High Order Similarity}) The high order similarity $\vV$ is defined as:
\begin{equation}\label{equ:fold}
{\vV}_{i,j}=\hat{\textbf{v}}_l
\end{equation}
where $j =l - m(i-1)$ and $i \in \underline{m}$~for $l \in \underline{m^2}$.
\end{definition}

An illustration of the aforementioned results is drawn in Fig.~\ref{link}. We now note that
the difference between the pairwise similarity $\vS$ and the obtained high order similarity $\vV$ is inconsequential if the later is from a decomposable tensor similarity. This result also confirms that the obtained high order similarity $\vV$ in Eq.~(\ref{equ:fold}) suffers from the same drawbacks as the popular pairwise similarity does. They are vulnerable to noise corruptions and indifferentiable for samples with large feature dimensions~\cite{sarkar2019perfect}. We now turn to focus on the indecomposable tensor similarity and show that the high order similarity extracted from it provides complementary information for which the pairwise one may miss.

\begin{figure*}[!htb]
   \centering
    \subfigure[Pairwise similarity]
     {\includegraphics[width=2.3in]{./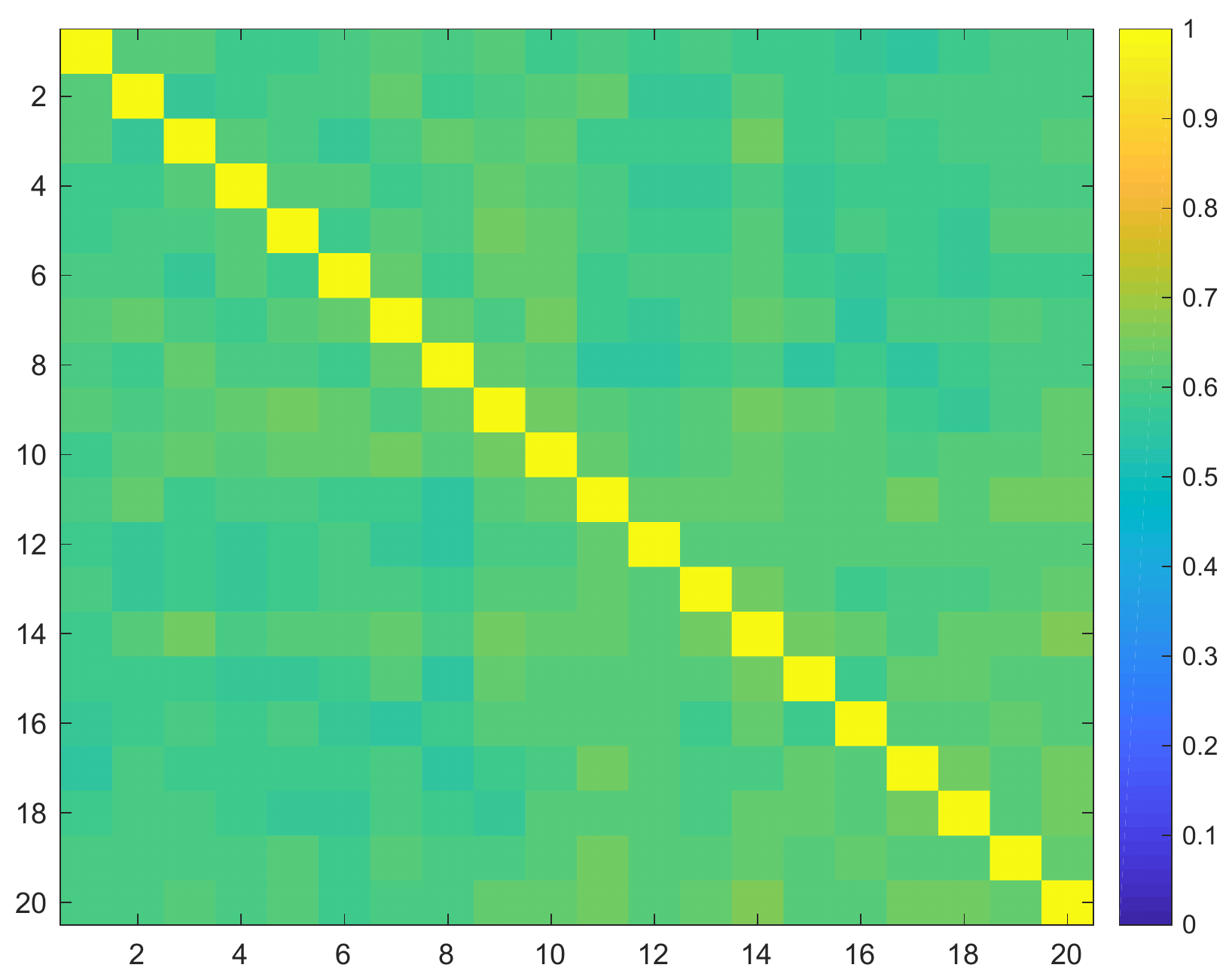}}
    \subfigure[High order similarity]
      {\includegraphics[width=2.3in]{./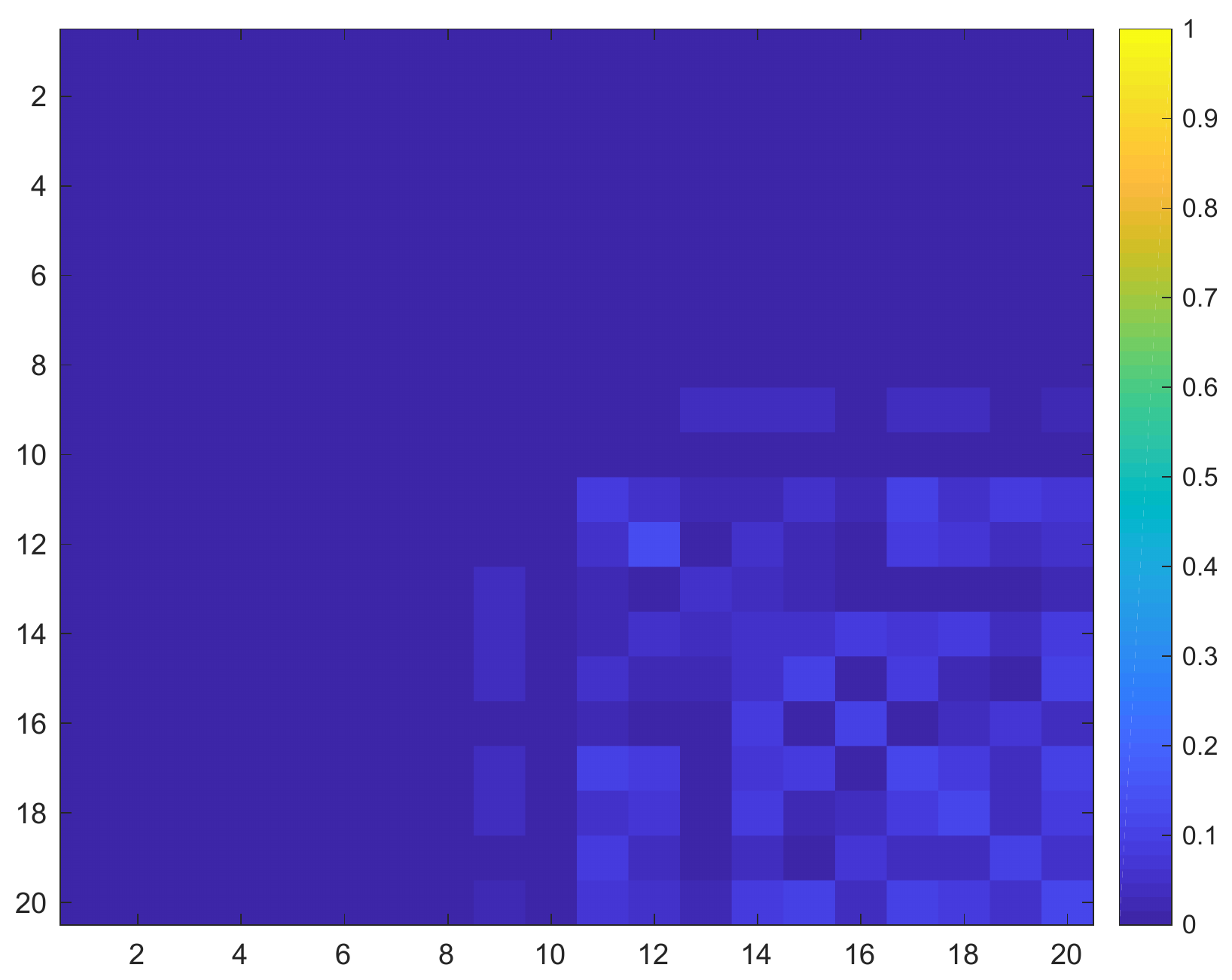}}
         \subfigure[Fused similarity]
     {\includegraphics[width=2.3in]{./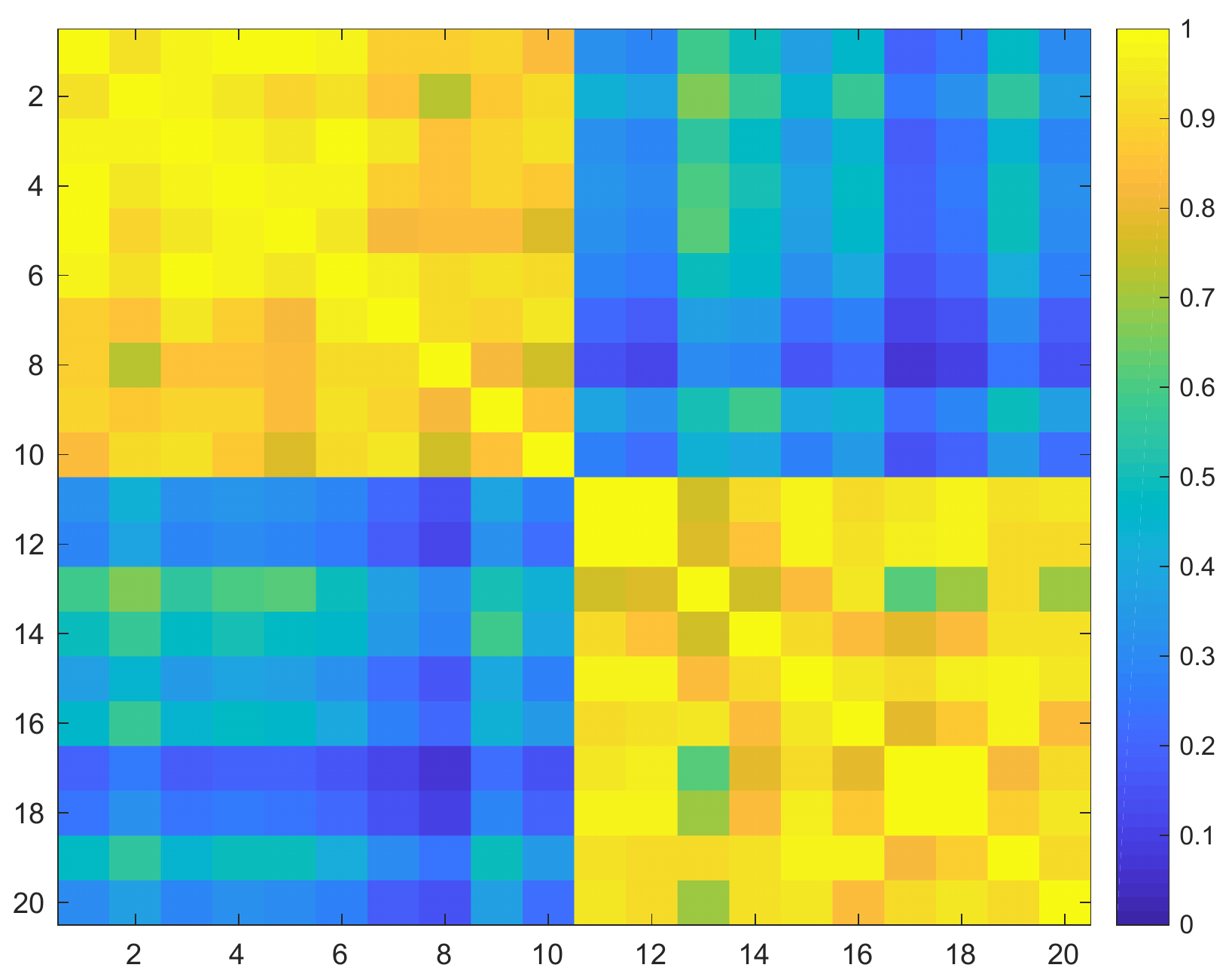}}
    \subfigure[Pairwise frequency distribution]
     {\includegraphics[width=2.3in]{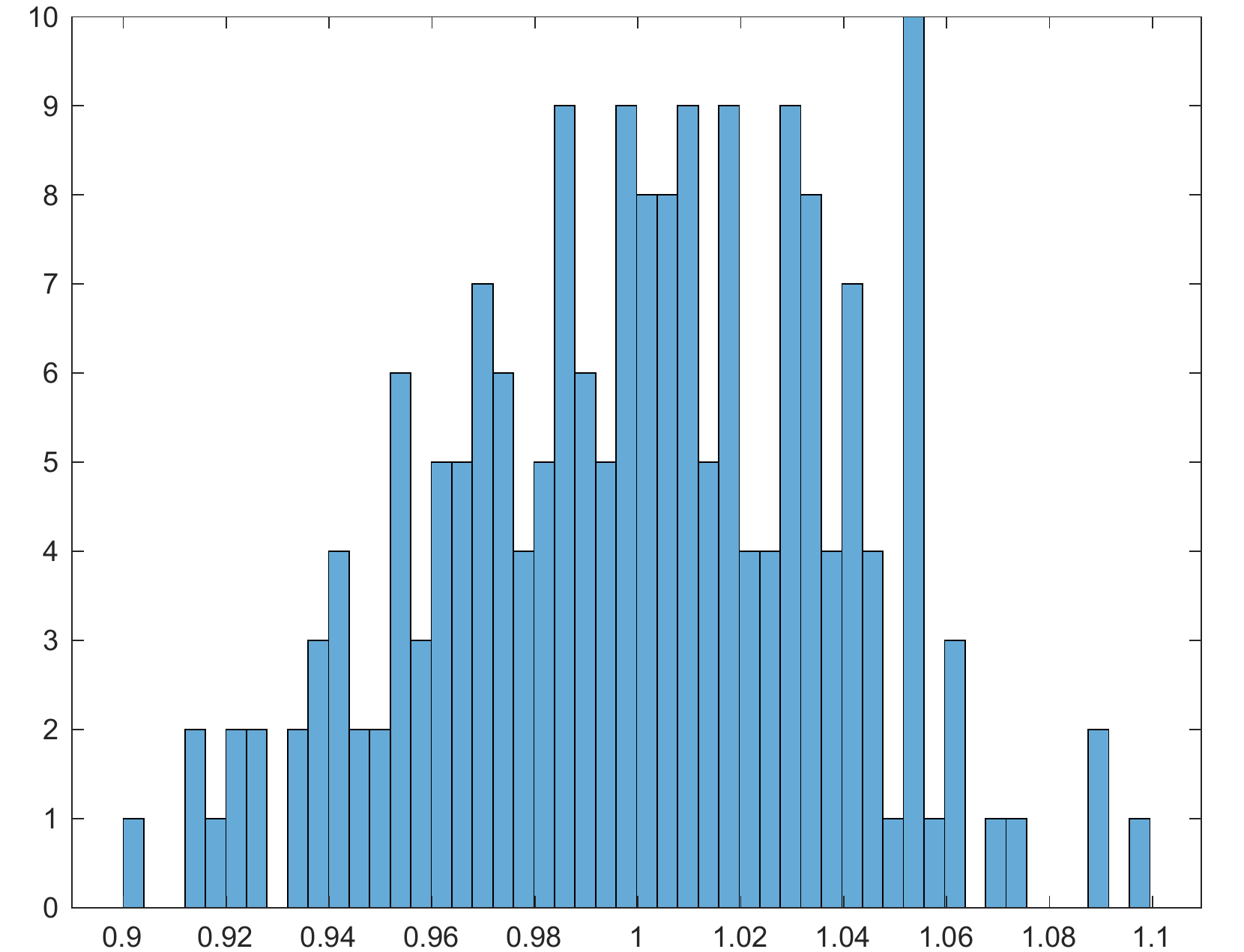}}
    \subfigure[High order frequency distribution]
      {\includegraphics[width=2.3in]{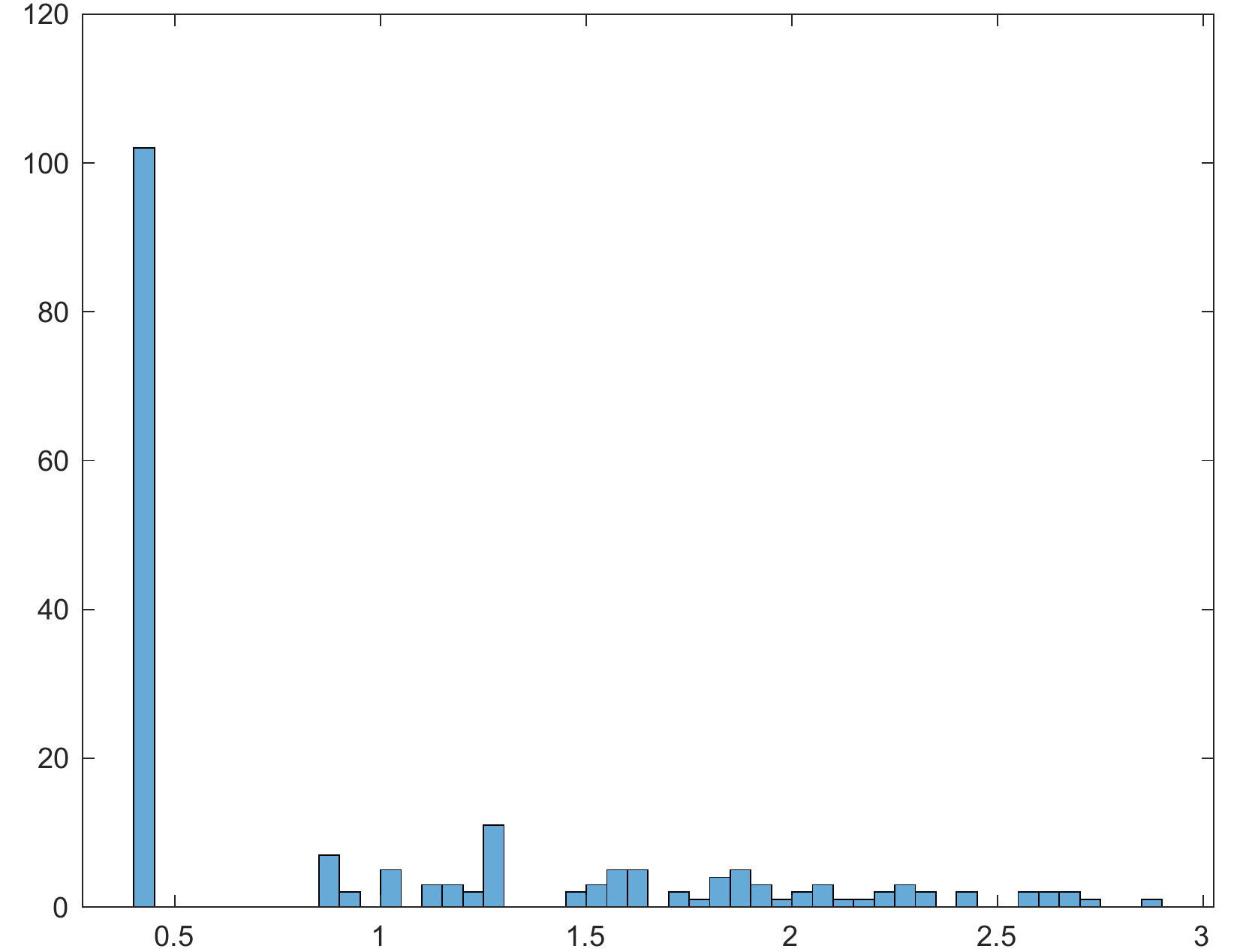}}
         \subfigure[Fused frequency distribution]
     {\includegraphics[width=2.3in]{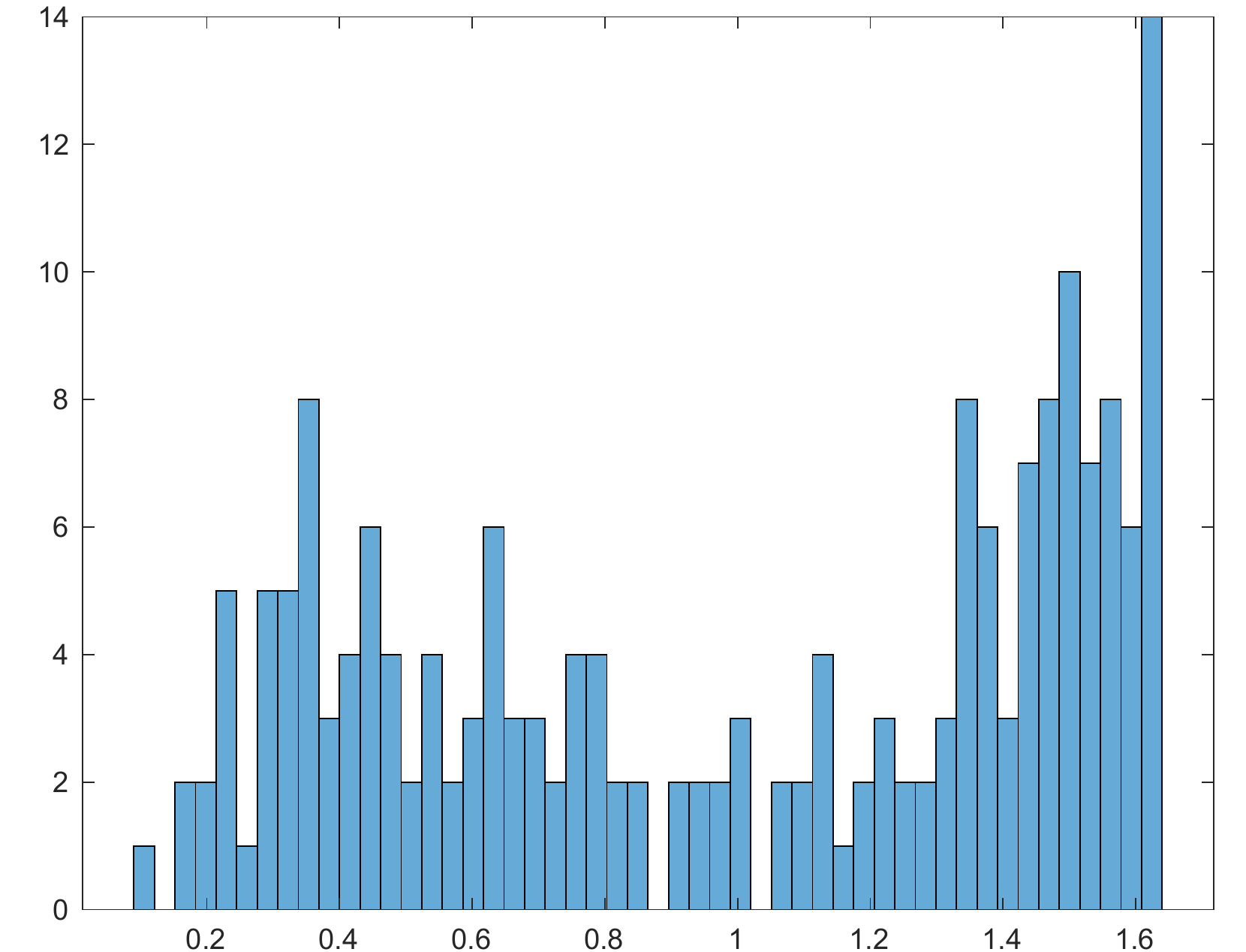}}
   \caption{Synthetic example for validating the complementarity of the high order similarity. This first row shows the heatmaps of the pairwise similarity, the high order similarities and their summation. The second row shows  the corresponding frequency distributions of the three similarities, respectively. This salient block structure and two peaks in the frequency distribution of the summation suggests that the high order similarity indeed captured information which is beneficial to the pairwise similarity.}\label{fig:frequency}
\end{figure*}

\subsection{Indecomposable Tensor Similarity Provides Complementary Information for Pairwise Similarity}\label{measurement}

The primary purpose of the tensor similarity is to supplement information which a pairwise similarity fails to provide, thereby improving the accuracy and robustness in clustering task. To achieve the goal, we propose to introduce an indecomposable tensor similarity for pairs. The idea is to capture the stable complementary information of data by encoding links among pairs. \emph{Different from the decomposable case, the indecomposable similarity plays a dual role: it not only characterizes the local manifold among the pairs but also provides the complementary information which the pairwise similarity fails to obtain.}
In view of this spirit, the indecomposable tensor similarity used in this paper is defined by:
\begin{equation}\label{equ:nondeomposable}
\begin{aligned}
{\cT}_{i,j,k,l} = \exp(-\sigma \frac{d_{i,j}+d_{k,l}}{d_{i,k}+d_{j,l}+\varepsilon})
\end{aligned}
\end{equation}
for $i,j,k,l \in \underline{m}$, where $d_{i,j}$ denotes the distance between samples $\vx_i$ and $\vx_j$, i.e. Euclidean distance. Parameter $\varepsilon$ is a given small parameter less than 0.001 to overcome the instability caused by a zero denominator. The parameter $\sigma$ is an empirical parameter. Entry ${\cT}_{i,j,k,l}$ refers to the similarity between pair $(\vx_{i},\vx_{k})$ and pair $(\vx_{j},\vx_{l})$. It models both relationships within pairs as well as the geometric links between pairs. It achieves its minimum value when the pair $(\vx_i,\vx_k)$ falls into a cluster $c$, while the pair $(\vx_j,\vx_l)$ falls into a distinct cluster $c'$.

Once we have the indecomposable tensor similarity $\cT$, we can unfold it into a square matrix $\hat\vT$. Similar to the decomposable case, we compute its dominant eigenvector of the Laplacian matrix and reshape it into a $m\times m$ dimension matrix $\vV$. The obtained matrix is now viewed as a high order similarity extracted from the indecomposable tensor.

One of the main challenges for pairwise similarity is its vulnerability to noisy datasets. It is important to stress that the indecomposable tensor similarity can confront this challenge both in two sides. To demonstrate such characteristics, we create a synthetic dataset for signifying its merits.

The synthetic dataset composes of three clusters, drawn from i.i.d Normal distributions with an equal standard deviation of 0.5 and the different mean of 1, 2.5 and 5, respectively. The dataset consists of 50 samples, each with a feature dimension of 500. We note that there are small differences in the computed pairwise similarity. The histograms on the pairwise similarity, unfolded decomposable and indecomposable tensor similarity are shown in Fig.~\ref{fig:pairwise_decom_nondecom}. The off-diagonal areas in Fig.~\ref{fig:pairwise_decom_nondecom} (a) and (b) are shown to be similar to the diagonal areas. This result implies that clustering based on the pairwise similarity or decomposable tensor similarity was heavily affected. In comparison, the indecomposable tensor similarity is shown to contain distinct components as Fig.~\ref{fig:pairwise_decom_nondecom} (c) shown. The off-diagonal and diagonal areas are separated. The indecomposable tensor similarity depicts geometric links between the pairs and thus provides complementary information which the pairwise similarity missed.

To verify such complementarity and thereby have a deep understanding of the similarity extracted from the indecomposable tensor, we modified the previous synthetic data by keeping only two clusters. Heatmaps of the pairwise, extracted high order similarities and their summation, are shown in the first row of Fig.~\ref{fig:frequency}. The frequency distributions on the similarity values of the three types are calculated and shown in the second row of Fig.~\ref{fig:frequency}. Due to the small sample size with large feature dimension, notoriously known as {\it curse of dimension}, the histogram distribution of the pairwise similarity behaves like Normal distribution, resulting in that two clusters in the pairwise similarity have no apparent boundary (Fig.~\ref{fig:frequency}(a)). Thus, it is no wonder that there is no distinction between the two clusters in the pairwise frequency distribution~(Fig.~\ref{fig:frequency}(d)).
In comparison, the high order similarity has an apparent block structure. One could find a clear boundary in its histogram in Fig.~\ref{fig:frequency}(e), implying another cluster. These differences demonstrate that the indecomposable tensor similarity could provide information that is ignored by the pairwise similarity. Furthermore, when the pairwise similarity and the extracted high order similarity are fused by calculating their average, the fused result has two apparent blocks~(Fig.~\ref{fig:frequency}(c)), which leads to two separated peaks in its frequency distribution~(Fig.~\ref{fig:frequency}(f)). The surprising results also confirmed superior clustering performance by exploiting both the pairwise and extracted high order similarity simultaneously.

\subsection{Similarity Fusion by Averaging the Pairwise and the Extracted High Order Similarities}\label{sec:information fusion}

In the previous section, we demonstrated that the indecomposable tensor similarity could provide complementary information for a pairwise similarity. Combining with the extracted high order information learning from pairs, we aim to enhance the clustering performance of methods based on a pairwise similarity to deal with scenarios in which it yields poor performance.

To enjoy the merits of both extracted high order and pairwise similarities, we propose to average them for clustering directly. Let matrices $\vS$ and $\vV$ be pairwise and the high order similarities respectively, and the fused similarity is calculated by $\textbf{U}= \frac{1}{2}(\vS +\vV)$ which is used to represent the affinities of samples for the usage of clustering.

Finally, we would like to emphasize that there are numerous ways to define or learn the indecomposable tensor similarity, as the popular pairwise similarity does. Also, there are different approaches to fuse them. Here, we only used a simple approach to demonstrate the effectiveness of the current strategy.

\begin{algorithm}[htb]
\caption{ Algorithm for Solving IPS2}\label{tab:algorithm}
\begin{algorithmic}[1]
\Require
Dataset with $m$ samples $\vX=[\vx_1,\vx_2,...,\vx_m]$;
Number of clusters $c$;
Number of nearest neighbours $k$;
\Ensure
The clustering labels;
\State Constructing a pairwise similarity matrix of data $\vS$ by Gaussian kernel function;
\State Constructing a sparse tensor similarity $\cT$ and unfolding it into matrix $\hat {\vT}$. Then compute the corresponding Laplacian matrix $\hat {\vL}$;
\State Reshaping the top $c$ eigenvectors of $\hat {\vL}$ into matrices $\vV_i, i\in \underline{c}$;
\State Calculating the integrated similarity matrix $\vV$ by averaging all {$\vV_i$};
\State Computing the summation of $\vS$ and $\vV$ and running k-means on it to obtain clustering labels;
\end{algorithmic}
\end{algorithm}

\subsection{Numerical Scheme for IPS2}\label{optimization}

In this section, we provide a numerical scheme for solving the proposed IPS2 algorithm.

Suppose that we have a raw dataset, the first step is to construct a non-decomposable tensor similarity $\cT$ by calculating relationships among pairs. It has a size of $m^4$ for $m$ samples. To reduce the computational cost, the top $k$ nearest neighbors of each sample are stored and used to calculate the tensor similarity. The remaining entries, their values are assigned to be zero. Thus, the obtained tensor similarity $\cT$ is sparse. It is then unfolded to have a large square matrix of $\hat{\vT}$. From which, the Laplacian matrix $\hat{\vL}$ is constructed, and the top $c$ dominant eigenvectors are calculated. The eigenvectors are reshaped into square matrices, and thereby we obtained the high order similarities $\vV_1,\vV_2,\cdots, \vV_c$. They are fused with the pairwise similarity by simple averaging to yield a fused similarity. Finally, the clustering task is achieved by using benchmark methods, such as $k$-means to obtain the sample assignment.

We sketch the pseudocode of the algorithm in Alg.~\ref{tab:algorithm}. To verify that the high order similarity serves as complementary information for the pairwise one, we develop the algorithm which only uses high order information for clustering and denote it as PPC. The pseudocode of PPC is the same as IPS2 but without step 5.

\section{EXPERIMENTAL RESULTS}\label{Results}

We evaluated the performance of the proposed method on four real-world datasets, 'Soybean', 'SCADI', 'BBCSports', and 'Yale'. We downloaded Soybean and SCADI datasets from \emph{UCI Machine Learning Repository~(https://archive.ics.uci.edu/ml/index.php)}, BBCSports dataset from \emph{Insight Resources~{http://mlg.ucd.ie/datasets/bbc.html}}, and Yale dataset from \emph{http://vision.ucsd.edu/content/yale-face-database}. To elaborate the ability of our method tackling class-imbalance and noise contamination, we further constructed two challenging synthetic datasets, and the description of them are given in the following. Finally, the analysis of time complexity and parameter pruning is provided.

\subsection{Evaluation Metrics and Baseline Methods}

We used five widely-used metrics to evaluate the performance of the proposed method~(\textbf{Suppl. Section 3}): accuracy (ACC), adjusted rand index (ARI), F-score(F-SCORE), normalized mutual information (NMI), and purity (PURITY). The larger the value gets, the better the performance is.

For a fair comparison, we employed five representative methods to serve as a baseline. A brief introduction and the parameter settings of these approaches are as follows:

\begin{itemize}
 \item Spectral clustering (\textbf{SC}) \cite{ng2002spectral}: the classic spectral clustering gives a baseline on behalf of pairwise similarity.

 \item Sparse subspace clustering (\textbf{SSC}) \cite{elhamifar2013sparse}: SSC introduces the self-expressiveness property of data to study the sparse representation of each point in the dictionary consisting of other points. We set parameters according to the suggestion of the authors.

 \item Clique averaging+ncut (\textbf{CAVE})\cite{agarwal2005beyond}: The hypergraph, on behalf of the high order similarity, is approximated using the clique averaging, and the resulting graph is partitioned using the normalized cuts algorithm.

 \item Finding hypergraph communities (\textbf{HGC}) \cite{vazquez2009finding}: A hypergraph generative model with a built-in group structure and with the variational Bayes algorithm for finding network communities. We set parameters according to the authors' suggestion.

 \item Pair-to-pair clustering (\textbf{PPC}): Applying the spectral clustering method on our proposed high order similarity only. It serves as a baseline to verify the complementarity of the high order similarity to the pairwise similarity.

 \item Integrating tensor similarity and pairwise similarity (\textbf{IPS2}): The proposed method.
\end{itemize}

\subsection{Experiments on Real-world Datasets to Demonstrate the Superiority of IPS2}

We verified the effectiveness of IPS2 on four real-world datasets. The data statistics are summarized in Tab.~\ref{tab2:data}. A detailed description of them are as follows:

\begin{itemize}
\item\textbf{Soybean} dataset is a small subset of the soybean database containing 47 distinct instances. Each subject has 35 attributes to describe its growth indicators.

 \item\textbf{SCADI} dataset contains 206 attributes of 70 children with physical and motor disability based on ICF-CY. The first 205 attributes record self-care activities based on ICF-CY. The last column 'Class' is the label, which refers to the presence of the self-care problems of the children with physical and motor disabilities.

 \item\textbf{BBCSports} dataset consists of 282 documents from the BBC Sports website. All documents correspond to sports news articles in five topical areas from 2004 to 2005 and extract 2,544 features of it as our dataset.

 \item\textbf{Yale} dataset contains 165 grayscale images in GIF format of 15 individuals. Each individual has 11 facial expressions or configuration images: center-light, with glasses, happy, left-light, without glasses, normal, right-light, sad, sleepy, surprised, and the wink. We chose 5 classes of all images and extracted 4,096-dimensional raw pixel values of 55 images for our experiments.
\end{itemize}

Except for the Soybean data, the number of the feature dimensionality is larger than that of the sample size. Such imbalance within these datasets brings challenges to achieve an accurate clustering. For a fair comparison, we ran each method 50 times and calculated their mean values and standard deviations on the five measurements stated in Tab.~\ref{results}. In term of the type of the similarity, six methods can be divided into three categories, i.e. 'pairwise', 'high order' and 'fused'. The best results are highlighted in bold.

\begin{table}[!t]
  \centering
  \caption{Statistics on  four real-world datasets}\label{tab2:data}
  \renewcommand{\arraystretch}{1.5}
    \begin{tabular}{cccc}
    \hline
    Dataset & Samples  & Features & Clusters \\
       \hline
           Soybean         &47 & 35 & 4  \\
           SCADI           &70 & 200 & 7\\
           BBCSports & 60 & 2544 & 3 \\
           Yale    & 55 & 4096 & 5 \\
    \hline
    \end{tabular}
\end{table}

\begin{table*}[!t]
  \centering
  \begin{threeparttable}
  \caption{Clustering performance \protect (mean $\pm$ standard deviation) of all six methods}\label{results}
  \renewcommand{\arraystretch}{1.5}
    \begin{tabular}{c|c|c|c|c|c|c}
    \toprule
    \multirow{2}*{Dataset} &SC & SSC & CAVE & HGC & PPC & IPS2\\
    \cline{2-7}
    \multirow{2}*{} &\multicolumn{2}{c|} {Pairwise}& \multicolumn{3}{c|} {High order} & Fused\\
   \bottomrule
    \multicolumn{7}{c} {ACC}\\
    \midrule
    Soybean &0.787$\pm$0.031 &0.542$\pm$0.063 &0.894$\pm$0.030 &0.617$\pm$0.030 &0.809$\pm$0.071 &\textbf{0.936$\pm$0.040}\\
    SCADI  &0.860$\pm$0.031 &0.296$\pm$0.055  &0.564$\pm$0.007 &0.371$\pm$0.030  &0.842$\pm$0.077  &\textbf{0.877$\pm$0.040}\\
    BBCSports &0.533$\pm$0.086   &0.492$\pm$0.073 &0.617$\pm$0.030  &0.483$\pm$0.030 &0.617$\pm$0.093 &\textbf{0.800$\pm$0.100}\\
    Yale &0.491$\pm$0.081 &0.428$\pm$0.072 &0.455$\pm$0.030 &0.400$\pm$0.030 &0.473$\pm$0.079 &\textbf{0.600$\pm$0.069} \\
       \bottomrule
    \multicolumn{7}{c} {ARI}\\
    \midrule
    Soybean &0.576$\pm$0.090 &0.205$\pm$0.073 &0.739$\pm$0.090 &0.361$\pm$0.090  &0.610$\pm$0.193 &\textbf{0.829$\pm$0.119}\\
    SCADI  &0.701$\pm$0.090 &0.088$\pm$0.053  &0.430$\pm$0.005 &0.107$\pm$0.090  &0.644$\pm$0.193 &\textbf{0.761$\pm$0.119}\\
    BBCSports &0.055$\pm$0.100   &0.080$\pm$0.083 &0.152$\pm$0.090   &0.035$\pm$0.090 &0.365$\pm$0.115 &\textbf{0.523$\pm$0.132}\\
    Yale &0.214$\pm$0.080 &0.124$\pm$0.088 &0.190$\pm$0.090 &0.103$\pm$0.090 &0.150$\pm$0.089 &\textbf{0.247$\pm$0.061} \\
       \bottomrule
    \multicolumn{7}{c} {F-SCORE}\\
    \midrule
    Soybean &0.680$\pm$0.057 &0.419$\pm$0.058 &0.808$\pm$0.584 &0.539$\pm$0.058  &0.711$\pm$0.124 &\textbf{0.874$\pm$0.076}\\
    SCADI  &0.812$\pm$0.057 &0.300$\pm$0.049   &0.280$\pm$0.058 &0.775$\pm$0.124  &0.842$\pm$0.077  &\textbf{0.850$\pm$0.026}\\
    BBCSports &0.484$\pm$0.061   &0.419$\pm$0.058 &0.480$\pm$0.584  &0.369$\pm$0.058&0.581$\pm$0.069 &\textbf{0.680$\pm$0.081}\\
    Yale &0.367$\pm$0.066 &0.316$\pm$0.067 &0.341$\pm$0.079 &0.284$\pm$0.058 &0.307$\pm$0.074 &\textbf{0.387$\pm$0.051} \\
       \bottomrule
    \multicolumn{7}{c} {NMI}\\
    \midrule
    Soybean &0.728$\pm$0.067 &0.314$\pm$0.075  &0.813$\pm$0.079 &0.316$\pm$0.079 &0.803$\pm$0 .156 &\textbf{0.883$\pm$0.086}\\
    SCADI  &0.694$\pm$0.067 &0.261$\pm$0.049  &0.340$\pm$0.079 &0.657$\pm$0.156  &0.842$\pm$0.077  &\textbf{0.741$\pm$0.086}\\
    BBCSports &0.240$\pm$0.111   &0.135$\pm$0.088 &0.391$\pm$0.079  &0.109$\pm$0.079 &0.374$\pm$0.113 &\textbf{0.539$\pm$0.129}\\
    Yale &0.348$\pm$0.067 &0.271$\pm$0.096 &0.359$\pm$0.030 &0.223$\pm$0.079 &0.298$\pm$0.080 &\textbf{0.376$\pm$0.056} \\
       \bottomrule
    \multicolumn{7}{c} {PURITY}\\
    \midrule
    Soybean &0.787$\pm$0.030 &0.573$\pm$0.055 &0.894$\pm$0.030 &0.617$\pm$0.030 &0.809$\pm$0.077 &\textbf{0.936$\pm$0.040}\\
    SCADI  &0.860$\pm$0.030   &0.547$\pm$0.041  &0.777$\pm$0.007 &0.571$\pm$0.030  &0.842$\pm$0.077  &\textbf{0.877$\pm$0.040}\\
    BBCSport &0.533$\pm$0.072   &0.497$\pm$0.072 &0.617$\pm$0.030  &0.483$\pm$0.030 &0.683$\pm$0.090 &\textbf{0.800$\pm$0.098}\\
    Yale  &0.527$\pm$0.078 &0.445$\pm$0.070 &0.509$\pm$0.584 &0.418$\pm$0.030 &0.473$\pm$0.075 &\textbf{0.600$\pm$0.066} \\
    \bottomrule
    \end{tabular}
     \end{threeparttable}
\end{table*}

\begin{figure*}[!htb]
\centering
  \subfigure[SC-BBCSports]
    {\includegraphics[width=2.3in]{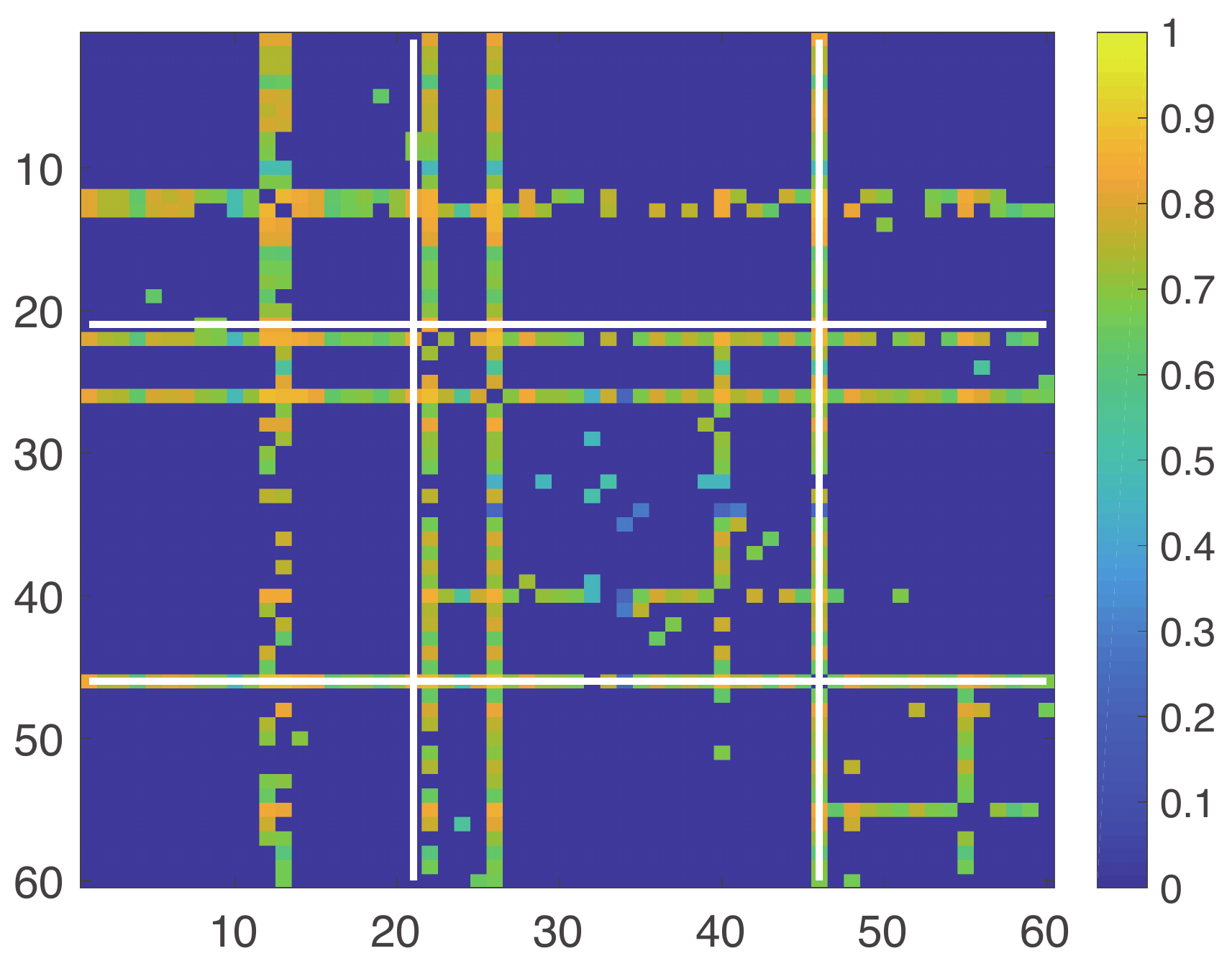}}
  \subfigure[PPC-BBCSports]
    {\includegraphics[width=2.3in]{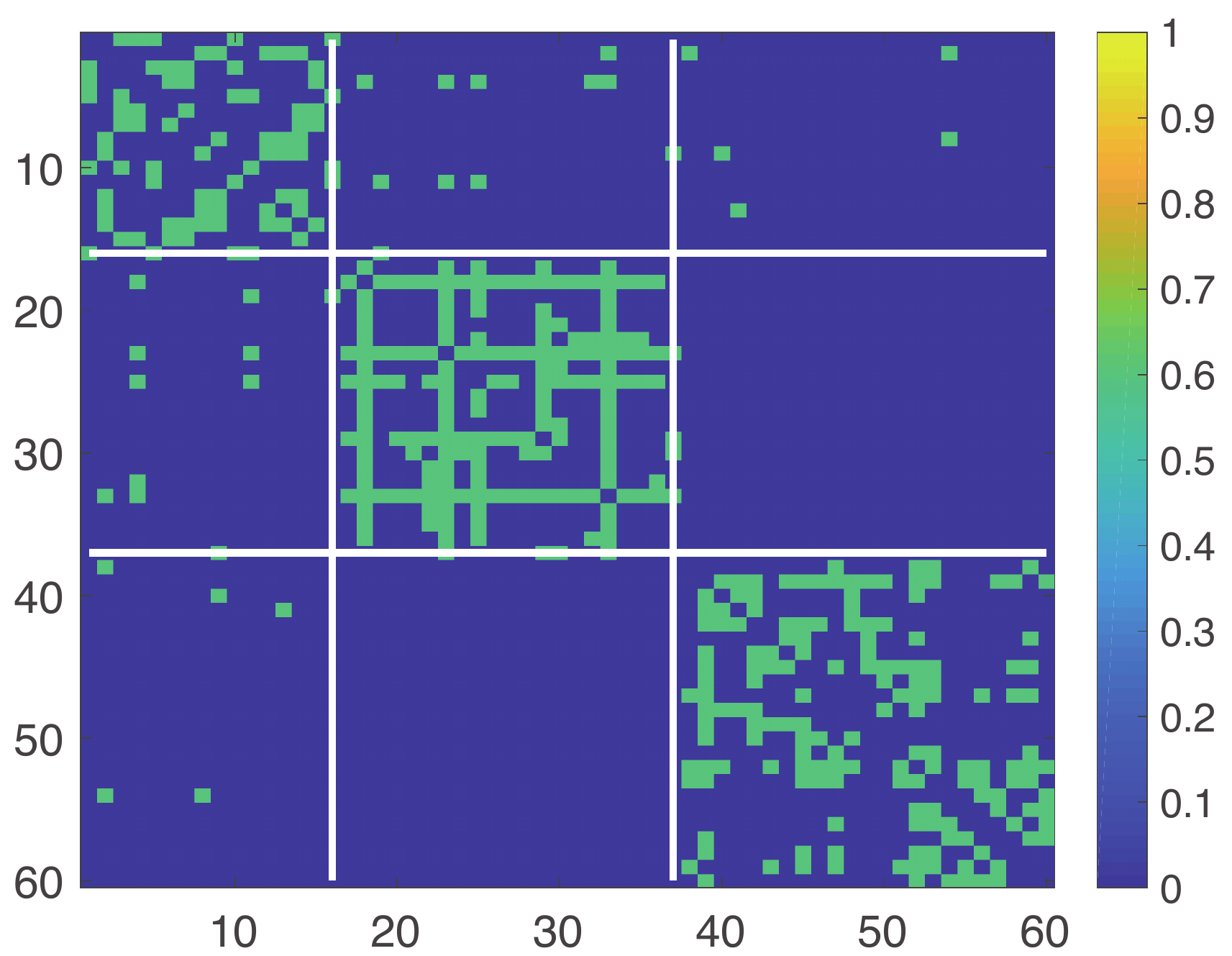}}
  \subfigure[IPS2-BBCSports]
    {\includegraphics[width=2.3in]{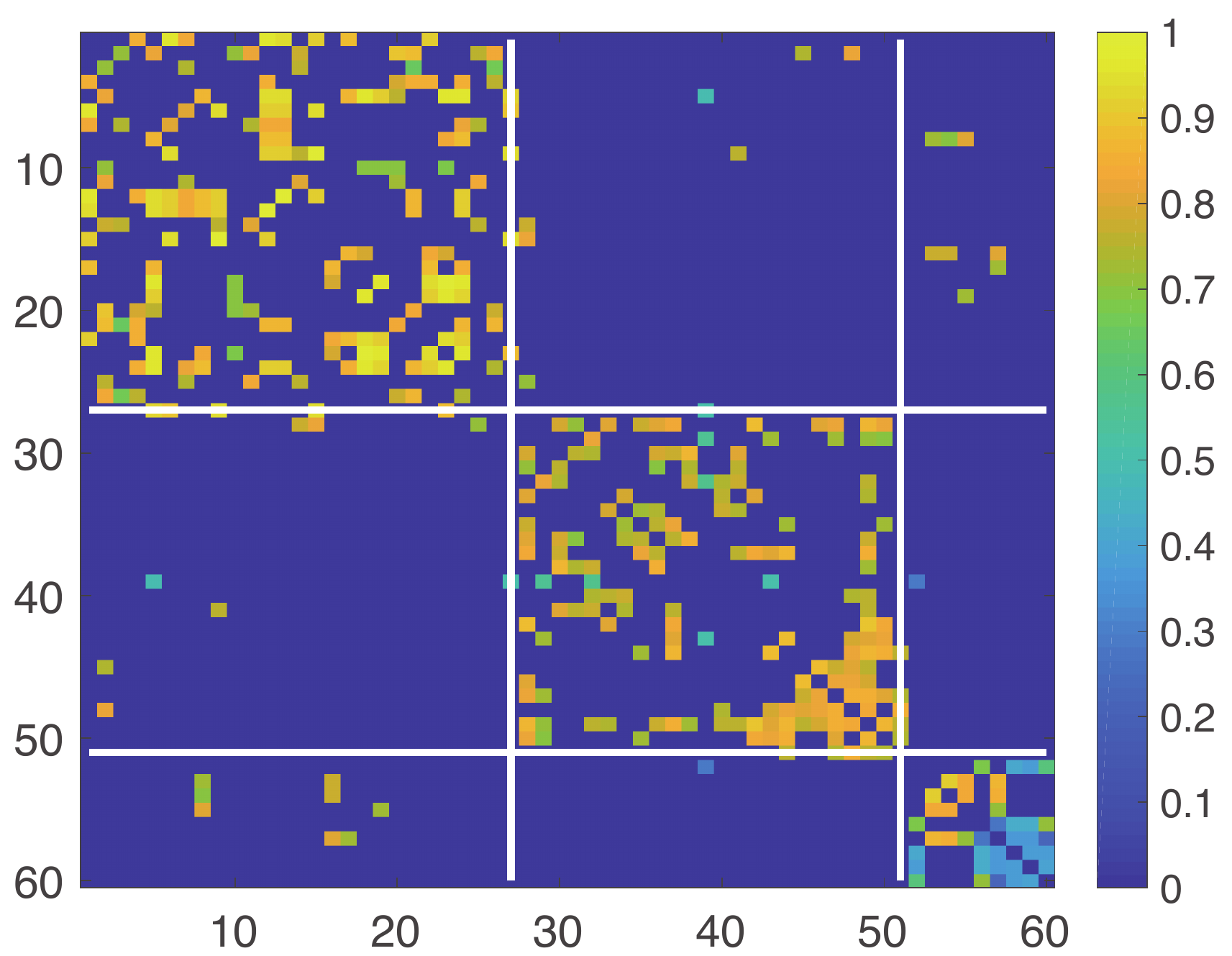}}
  \subfigure[SC-Yale]
    {\includegraphics[width=2.3in]{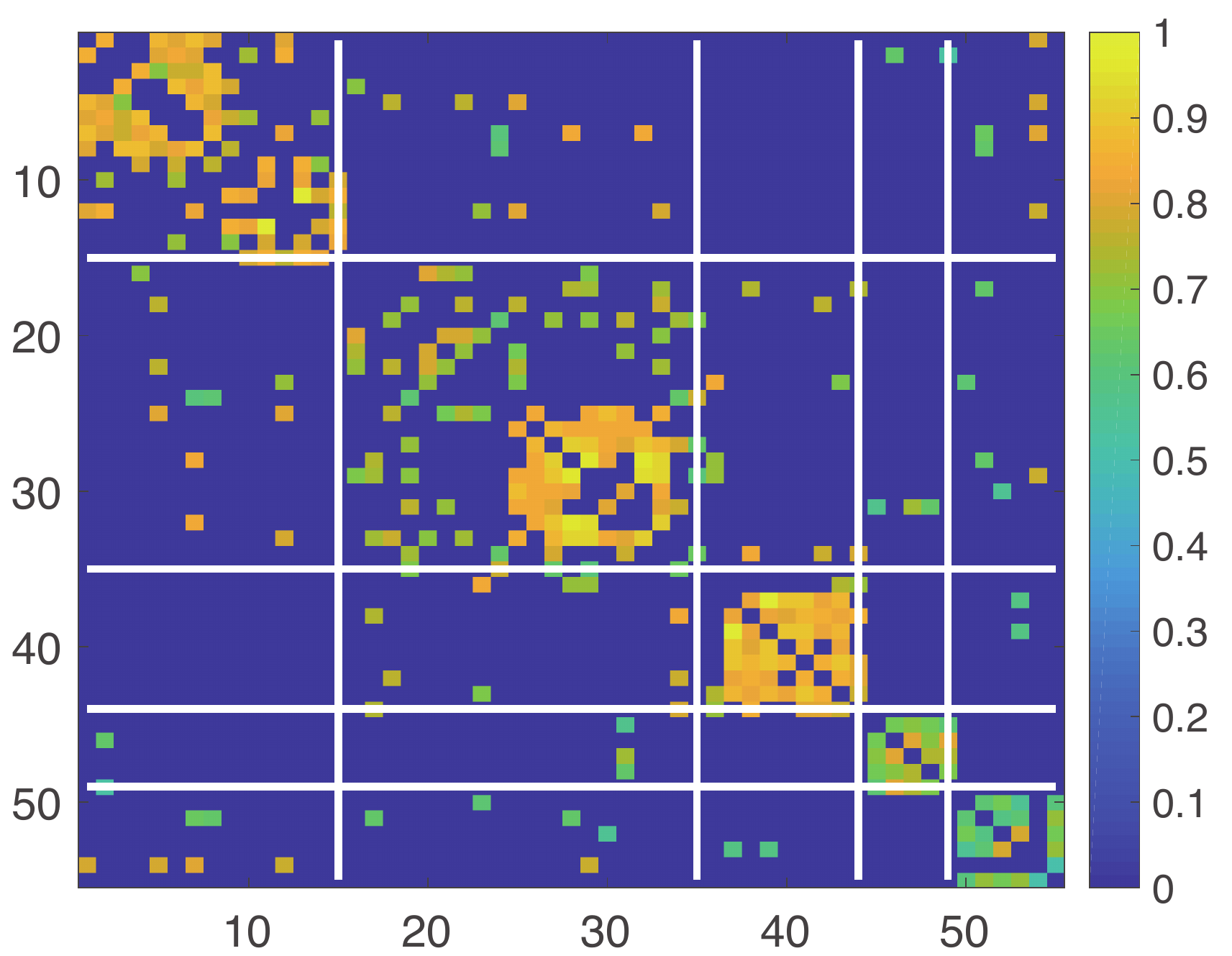}}
  \subfigure[PPC-Yale]
    {\includegraphics[width=2.3in]{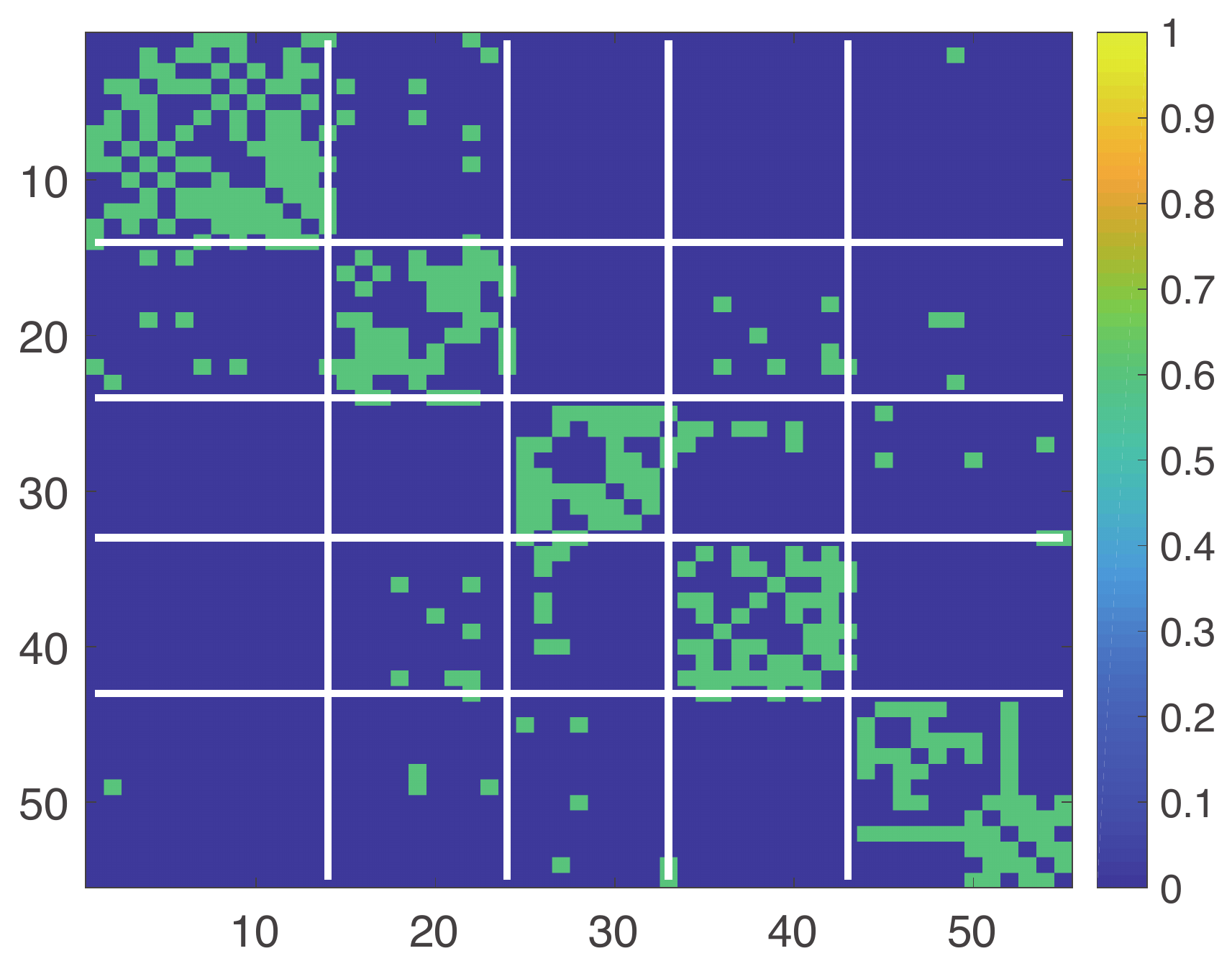}}
  \subfigure[IPS2-Yale]
    {\includegraphics[width=2.3in]{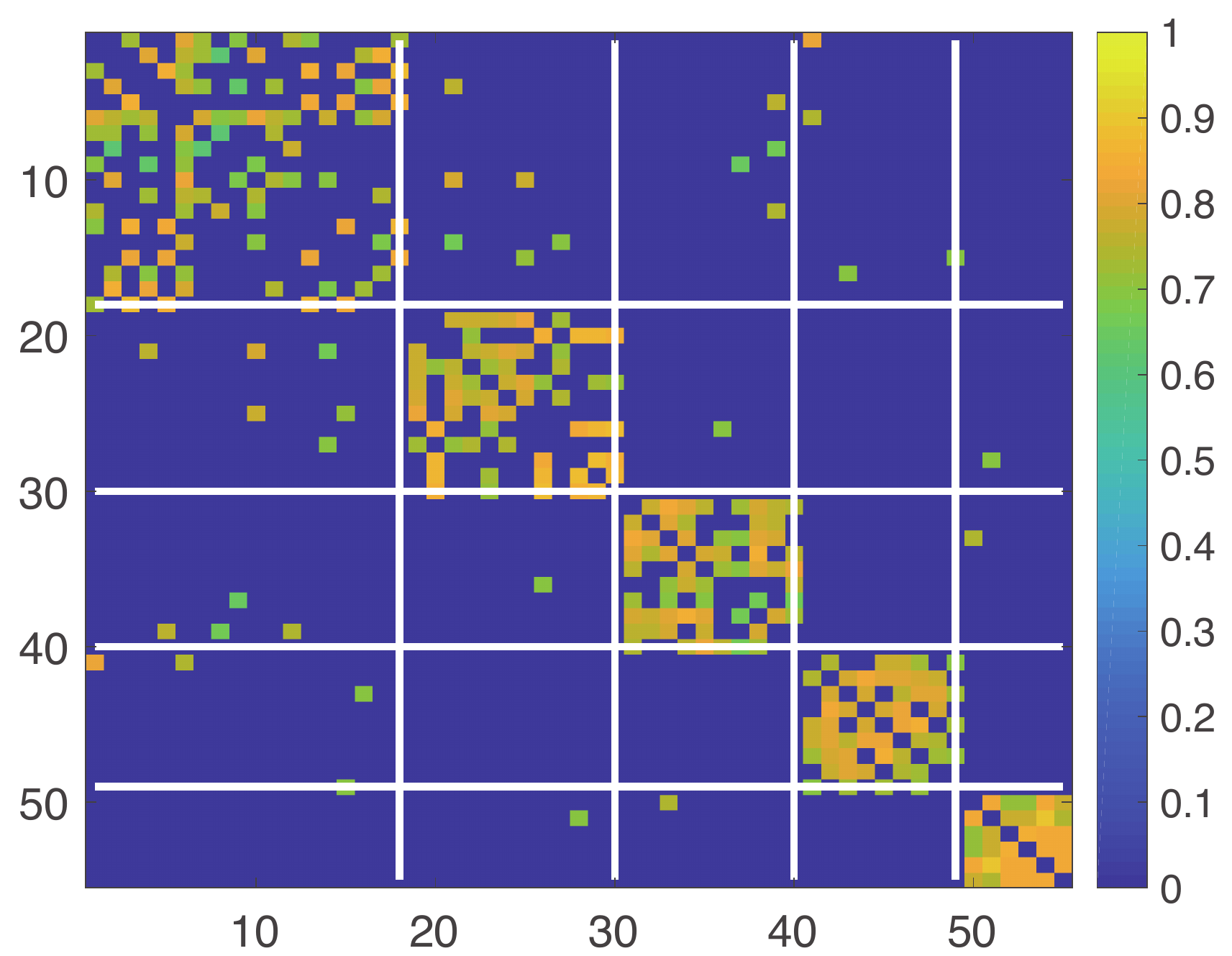}}
  \caption{The similarity heatmaps of SC, PPC and IPS2 over BBCSports and Yale datasets. To illustrate, SSC-BBCSports represents the heatmap of the pairwise similarity of SC over BBCSport, and the others are nominated in a similar way. These heatmaps demonstrate the learned high order similarity by IPS2 could relieve the influence of irrelevant features, and thereby enhance the robustness of the algorithm when facing high-dimensional datasets.}\label{fig:heatmap}
\end{figure*}

First, we observe that the proposed IPS2 based on the fused similarity uniformly outperforms SC and SSC, which solely depend on the pairwise similarity for clustering. For example, IPS2 exceeds SSC by 39.4$\%$, 58.1$\%$, 30.8$\%$ and 17.2$\%$ in term of ACC on all four datasets. These results demonstrate the effectiveness of the high order information, and further, the combination of the learned high order and pairwise similarity are way more powerful for achieving robust and accurate clustering performance. In addition, one may notice that SSC have significantly poor performance on SCADI. This result may come from the fact that SCADI violates the underlying assumption that SSC depends on.

Second, we note that the performance of CAVE and HGC are inferior to SC on SCADI, BBCSports and Yale, reflecting that high order relationships explored by averaging all samples within constructed hyperedges are inconducive for improving clustering performance on imbalanced datasets. By contrast, the proposed PPC performs comparably and even slightly better than SC, and is superior to the two hypergraph clustering algorithms. For example, PPC exceeds HGC by 19.2$\%$, 33.4$\%$, 13.4$\%$ and 7.3$\%$ in terms of ACC on four datasets. These results verify the effectiveness of the high order information explored by PPC and suggest that it has potential in enhancing the robustness of IPS2.

Third, our proposed IPS2 significantly outperforms the pairwise or high order similarity-based models. For example, IPS2 exceeds the best one of them by 5$\%$, 1.7$\%$, 18.3$\%$ and 12.7$\%$ in terms of ACC on Soybean, SCADI, BBCSport and Yale, respectively. These results indicate that the high order similarity indeed captures the evident structure of data that can be the strong complementary information for the pairwise similarity, leading to the superior performance.

To further validate the effectiveness of IPS2, we plotted the heatmaps of similarities produced by SC, PPC and IPS2 in Fig.~\ref{fig:heatmap}. Heatmaps of three columns refer to the pairwise similarity, the high order similarity and the fused similarity, respectively. In the first column, we notice that the pairwise similarities across two datasets contain loose clusters and exhibit many misclassified points within the off-diagonal areas. These results verify the limitation of the pairwise similarity. By contrast, the high order similarities have noticeable improvements. For example, the high order similarity on BBCSports possesses less misclassified samples within the off-diagonal regions and a more clear boundary along the diagonal, which the pairwise similarity fails to obtain. With the complementary information of the high order similarity, it is no wonder to see an evident and robust cluster distribution of fused similarities in IPS2. These results are consistent with observations in Tab.~\ref{results}.

\subsection{The Robustness of IPS2 on Under-sampled Datasets}

Since IPS2 fuses high order and pairwise information, it can effectively show the neighborhood structure of dataset that classical similarity-based methods fail to do. To verify this, we further investigated the performance of IPS2 over under-sampled datasets. The under-sample is either caused by the high-dimensional-low-sample-size or the class-imbalance. Therefore, we categorized the under-sampling problem into the foregoing two types. Accordingly, two synthetic datasets, named by USdata1 and USdata2, were constructed.

\begin{itemize}
 \item\textbf{USdata1} The first synthetic dataset consists of three clusters, and each one has 20 samples. They are drawn from i.i.d. Normal distribution with an equal standard deviation of 0.5, and different mean values of 0.1, 0.5 and 1, respectively. The dimensionality of the feature varies from 60 to 1,860, and thus the under-sample imbalance is gradually becoming severe. The dataset is further contaminated by white noise with zero mean value and 0.5 standard deviation.

\item\textbf{USdata2} The second synthetic dataset is also sampled from the same three clusters. Differently, The feature dimension is fixed to be 2,360. We also fixed the sample size in cluster 1 and 2 to be 20, while varying the sample size from 10 to 235 in cluster 3. The resulted dataset thus contains many more samples from cluster 3 than others, yielding a class-imbalance. The second dataset is both high-dimensional-low-sample-size and class-imbalance. It is rather challenging to cluster on this dataset.
\end{itemize}

\begin{figure*}[!htb]
  \centering
  \subfigure[USdata1]
   {\includegraphics[width=3.62in]{./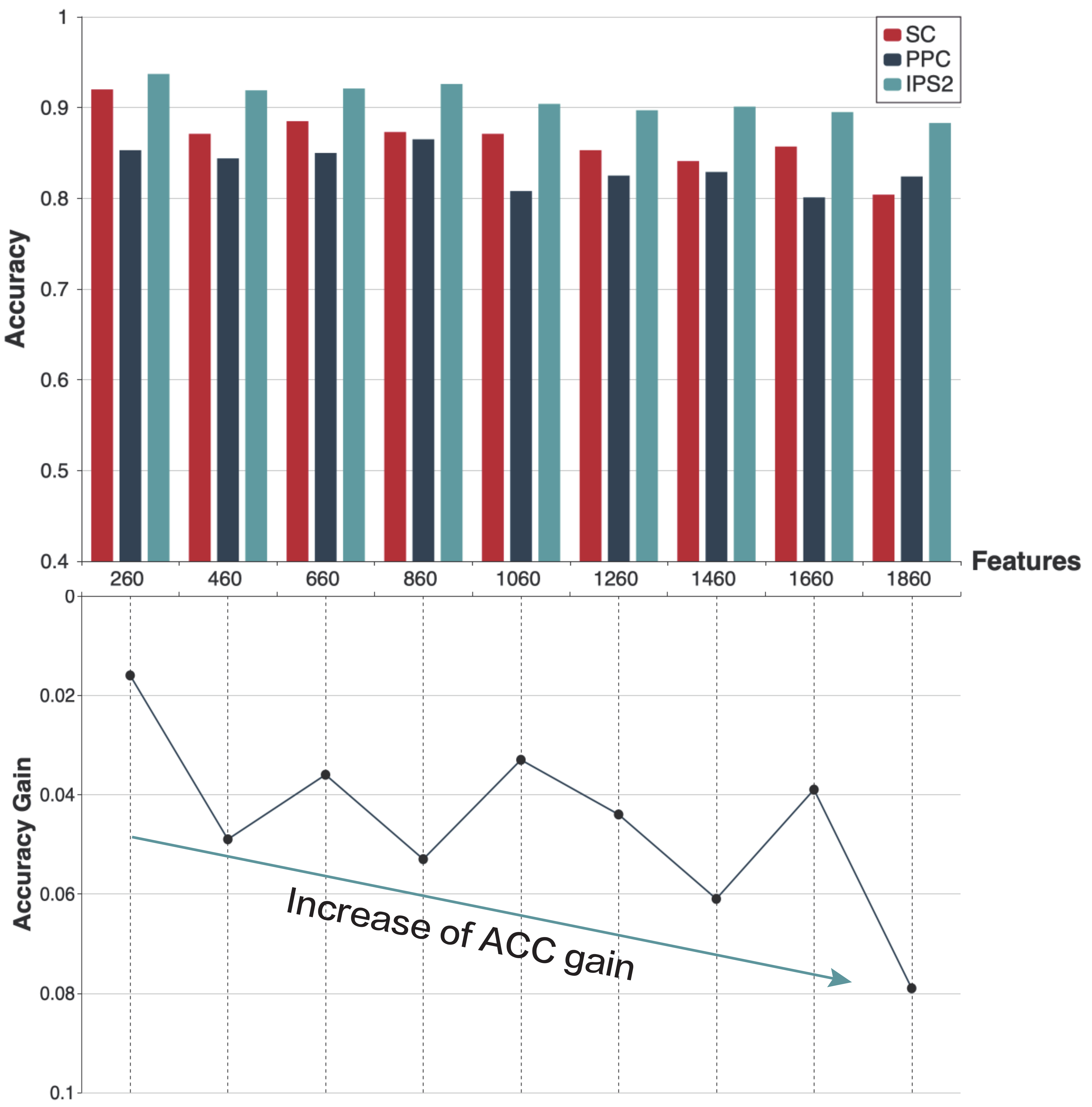}}
  \subfigure[USdata2]
   {\includegraphics[width=3.5in]{./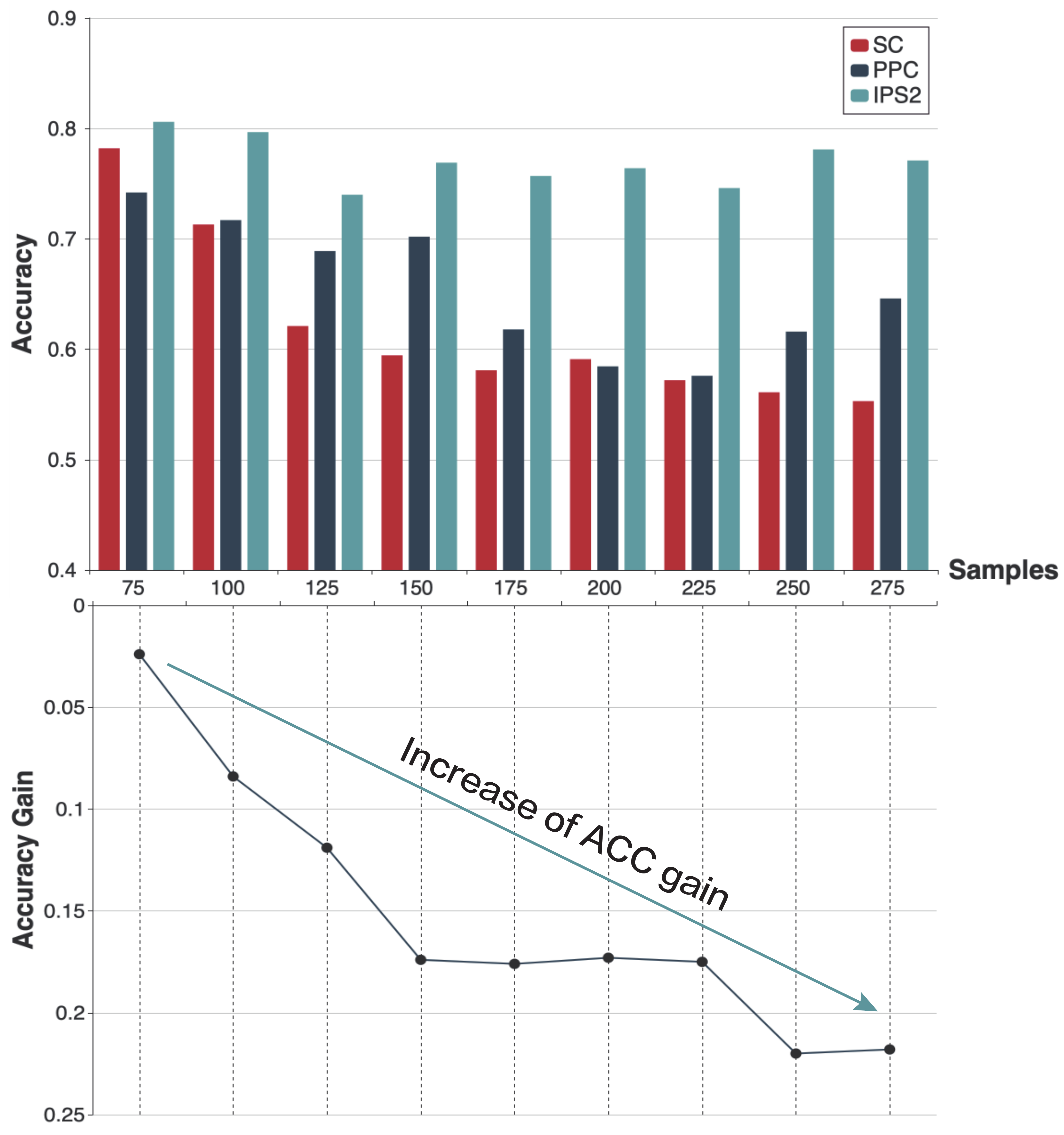}}
 \caption{Example for validating the robustness of IPS2 over under-sampled datasets. (a): accuracy of SC, PPC, and IPS2 on the USdata1 with the varying feature dimension from 260 to 1860; (b): accuracy of SC, PPC, IPS2 over the USdata2 with the varying sample size from 75 to 275. The lines denote the accuracy gain of IPS2 over SC. The superior performance of IPS2 indicates that incorporating the high order similarity is beneficial to improve the accuracy and robustness of the pairwise similarity even on the datasets with serious imbalance.}\label{fig:diff_fea_samples}
\end{figure*}

We applied SC, PPC and IPS2 on the two datasets, and the results are presented in Suppl. Tab. 1 and Suppl. Tab. 2~(\textbf{Suppl. Section 4}). The accuracy are visualized in Fig.~\ref{fig:diff_fea_samples}. Besides, we plotted the accuracy gain of IPS2 with respect to SC to further verify the effectiveness of the high order similarity.

In the two under-sampled datasets, the curse of dimension and class-imbalance make one sample similar to the others in different clusters, leading to low accuracy in clustering. Compared with SC and PPC, which solely depend on the pairwise or high order relationships, the fused similarity in IPS2 can capture an apparent structure of datasets. Therefore, our IPS2 outperforms uniformly over SC and PPC throughout all the experiments, even on the datasets with severe imbalance. The advantage is more evident when observing the accuracy gain over SC. In Fig.~\ref{fig:diff_fea_samples}, the gain increases as the imbalance rate increases in two datasets and yields its peak at 8$\%$ and 21.8$\%$ on the dataset with the highest imbalance rate. This result implies that the extracted high order similarity from the tensor similarity provide supplementary information, and combining it with the pairwise similarity enhances the robustness over the imbalance influence. Besides, it is worth to noted that PPC outperformed SC when the sample size of USdata2 increases from 100 to 275. This results confirmed that under-sampling imbalance breaks down the neighborhood structure of the pairwise affinity, and leads to the poor clustering performance when the imbalance becomes more and more severe. By comparison, the high order relationships in PPC indeed relieve such a problem. Therefore, IPS2 performs steadily with a slight decrease in accuracy by combining the high order and pairwise relationships.

\subsection{The Stability of IPS2 on Noisy Datasets}

Except for imbalance, real-world datasets are likely to be polluted by various types of noise in applications. Thus, noise immunity is of significance for clustering models. To evaluate the robustness of IPS2 to noise, we tested its performance with SC and PPC under four types of noise. The synthetic USdata1 was borrowed for testing. It was contaminated by the following types of noises. The parameter settings of these noise distributions are as follows.

\begin{figure}
\centering
 \includegraphics[width=3.5in]{./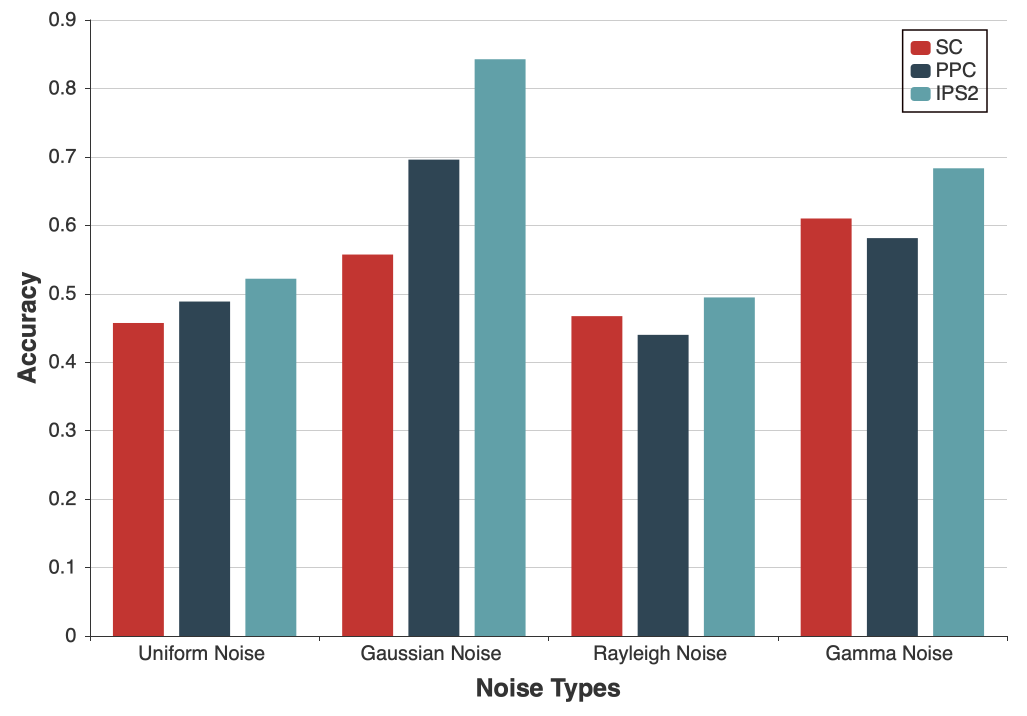}
 \caption{The performance of SC, PPC and IPS2 over synthetic dataset with different types of noise indicates the noise immunity of IPS2.}\label{fig:different_noise}
\end{figure}
\begin{itemize}
 \item \textbf{Uniform Noise} are drawn from Uniform distribution. We used the single uniform distribution, which returned random numbers in the interval (0,1).

 \item \textbf{Gaussian Noise} are drawn from Gaussian distribution $p(x)=\frac{1}{\sqrt{2 \pi} \sigma} \mathrm{e}^{-(x-\mu)^{2} / 2 \sigma^{2}}$. We set the mean $\mu=0$ and the covariance $\sigma = 0.5$.

 \item \textbf{Rayleigh Noise} are drawn from Rayleigh distribution $y=f(x | b)=\frac{x}{b^{2}} e^{\left(\frac{-x^{2}}{2 b^{2}}\right)}$, which is a special case of the Weibull distribution. The scale parameter $B = 0.5$.

 \item \textbf{Gamma Noise} are drawn from Gamma distribution. Gamma probability distribution function is $y=f(x | a, b)=\frac{1}{b^{a} \Gamma(a)} x^{a-1} e^{\frac{-x}{b}}$ where $\Gamma(.)$ is Gamma function. We set two scale parameter $A = 5$ and $B = 10$.
\end{itemize}

After USdata1 was corrupted by the noise, we ran SC, PPC and IPS2 on them. The experiment on each noise contamination was repeated 50 times, and the averaged results were recorded in Suppl. Tab. 3~(\textbf{Suppl. Section 5}). For visual comparison, the accuracy of three methods was plotted in Fig.~\ref{fig:different_noise}. One can verify that IPS2 uniformly outperforms the others on USdata1 through four kinds of noise. It achieves the best performance when the samples were corrupted by Gaussian noise.

We deem that IPS2 achieves such superior performance because the indecomposable tensor similarity is less affected by the Gaussian noise. Therefore, the obtained high order similarity reflects the accurate clustering information. Accordingly, both the PPC, whose performance solely depends on the high-order similarity and IPS2, achieve uniformly superior results. To further verify this, we tested the robustness of three methods over Gaussian noise contamination in USdata1 with different noise level. We varied the standard deviation of Gaussian noise from a small value of 0.2 to a large value of 0.8. The performance of three methods under various noise levels is presented in Suppl. Tab. 4~(\textbf{Suppl. Section 5}) and their accuracy is drawn in Fig.~\ref{fig:noise_level}. We observe that IPS2 always performs superior over the others. With the increase of noise level, the accuracy of all methods decreases rapidly. When the noise level is small, the IPS2 performed equally well to SC with the accuracy of 0.906 and 0.913, respectively. However, when the noise level increases, the accuracy of SC dramatically decreases. The accuracy of PPC decreases rapidly in the same manner. However, the accuracy of IPS2 remains relatively stable by unifying the merits of both the pairwise and high order similarities. Even when the noise level is as high as 0.8, IPS2 achieves a high accuracy of 0.760. In comparison, both PPC and SC yield a lower accuracy of 0.529 and 0.496, respectively.

\begin{figure}
  \centering
  \includegraphics[width=3.5in]{./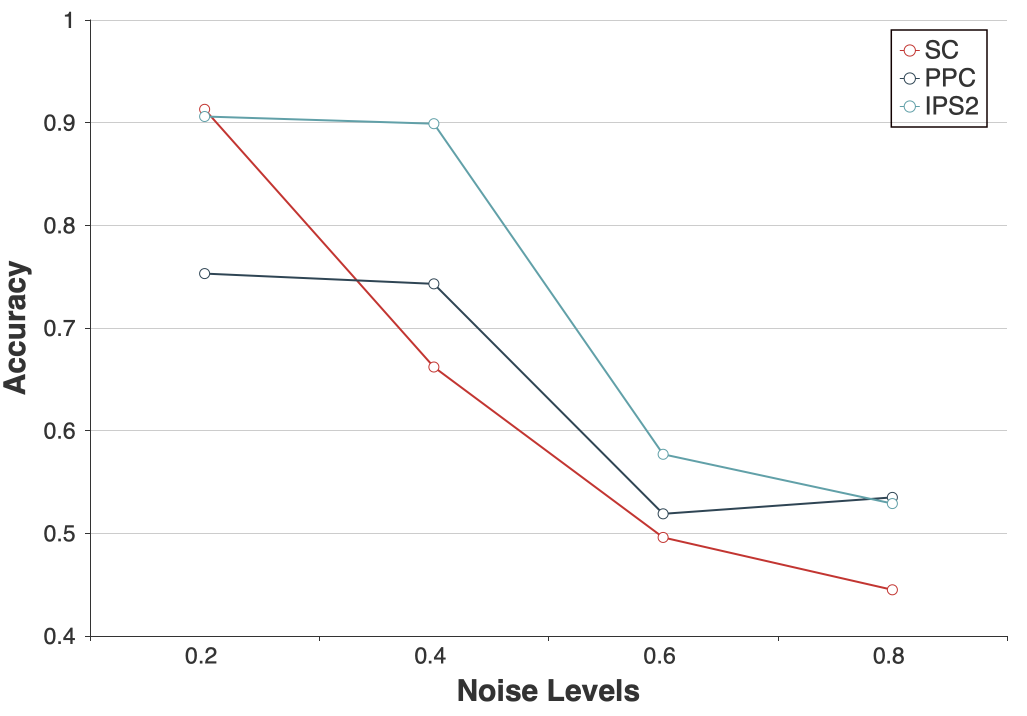}
   \caption{The performance of SC, PPC and IPS2 on synthetic datasets with different levels of Gaussian noise demonstrates that the neighborhood information captured by the high order similarity is robust to the noise contamination, even when it becomes highly destructive.}\label{fig:noise_level}
\end{figure}

\subsection{Computational Time Comparison and Parameter Selection}\label{sub:computation}

 The IPS2 consists of two major computational components which need an intensive cost. One is the construction of the tensor similarity, and the other is computing the eigenvectors of the Laplacian matrix for the unfolded tensor. For the former, we compute the similarity for pairs only by their the $k$-nearest neighbors. Given $m$ samples, this scheme reduces time complexity from $\mathcal{O}(m^4)$ to $\mathcal{O}(mk^4)$. We can further accelerate the speed by $\mathcal{O}(k^4)$ if using parallel programming in $m$ branches. For the latter, the tensor similarity is sparse, and thus the corresponding unfolded Laplacian matrix is sparse, too. For the sparse eigenvalue problem $\hat{\vT}\vx = \lambda \vx$, the computational time complexity is $\mathcal{O}(pm^2)$ by using Arnoldi algorithm, where $p$ is the mean number of nonzero elements of all columns in matrix $\hat{\vT}$, defined as $p=\frac{\#~nonzero~elements~in~\hat{\vT}}{\#~columns}$. In our scheme $p=\frac{mk^4}{m^2}=\frac{k^4}{m}$. In summary, the total computational cost is $\mathcal{O}(k^4)+\mathcal{O}(\frac{k^4}{m}m^2)=\mathcal{O}(mk^4)$.

\begin{table*}[!t]
 \centering
 \caption{Computational cost by six methods~(seconds) }\label{tab:time}
 \renewcommand{\arraystretch}{1.5}
  \begin{tabular}{ccccccc}
   \multicolumn{7}{c}{}\\
  \hline
  Datasets &SC  & HGC & AVER & SSC & PPC &IPS2\\
  \hline
     Soybean     &1.54 & 4.47 &3.54 &803.11 &15.46 &16.52\\
     SCADI      & 2.18 & 14.73 &5.08 &1267.10 &57.93 &58.91\\
     BBCSports  & 3.91 & 7.53 &5.65 &1074.36 &30.07 &31.44\\
     Yale        & 8.20 &12.58 &8.45 &1992.46 &30.14 &30.96\\
       \hline
     Time cost &$\mathcal{O}(m^2)$ &$\mathcal{O}(cm^2)$ &$\mathcal{O}(m^2)$ &- &$\mathcal{O}(k^4)$ &$\mathcal{O}(mk^4)$\\
  \hline
  \end{tabular}
\end{table*}

A small value of $k$ dramatically reduce the complexity, yet, it may be insufficient to capture the complementary information among pairs. We conducted a grid search for the parameter within interval $[5,15]$ on USdata1. The accuracy with respect to the value $k$ is plotted in Fig.~\ref{increase features}. One may notice that the accuracy goes up with the value of $k$ increasing. Around the value of 7, the trend remains relatively steady. Therefore, in this paper, the value of $k$ is pruned within range of [7,10] to balance the accuracy and complexity. As a rule of thumb, the value $k$ is set to be 10.

\begin{figure}
  \begin{minipage}[b]{.25\textwidth}
  \centering
  \includegraphics[width=3.5in]{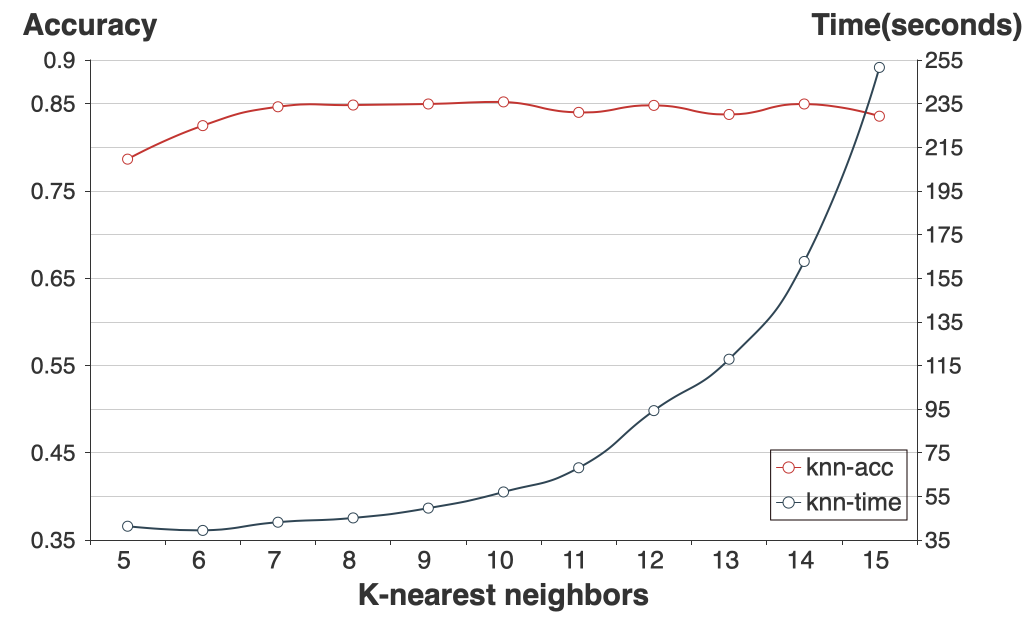}
  \end{minipage}
  \vskip -.3cm
 \caption{The time complexity and accuracy based on different $k$ in K-nearest-neighbors-search suggests that choosing $k$ in a suitable range could balance the complexity and accuracy by IPS2. }\label{increase features}
\end{figure}

To illustrate the comparison of the computational cost, we summarize the experimental computational time of all methods on four real datasets in Tab.~\ref{tab:time}. The time cost in Tab.~\ref{tab:time} is the averaged time cost of 50 runs. The value of $k$ for the IPS2 is set to be 10. Among the six methods, both the HGC and AVER need to construct a hypergraph by calculating the pairwise similarity, and therefore their computational cost is $O(m^2)$. The cost of HGC is slightly higher than that of AVER since the former is also dependent on the cluster number $c$. Except for SSC, which needs many iterations to converge, other methods cost less than 60 seconds. Although the theoretical complexity of IPS2 is higher than SC, HGC and CAVE, yet, it is acceptable in a real application by considering its superior performance. For example, SC is 15 times faster than IPS2, while its accuracy is 15 $\%$ less than that of IPS2. Moreover, one can further reduce the complexity of IPS2 in the step of constructing the tensor similarity by using parallel computation. Thus, the computational cost of IPS2 can reduced to an acceptable degree.

\section{Conclusions}\label{conclusion}

The performance of the classic clustering methods pins on an accurate pairwise similarity measurement. However, the measurement is notoriously known for being vulnerable to noise contaminations and failing to capture the neighborhood structure for the dataset with a small sample size and large feature dimensionality. The current work, serving as a proof-of-concept study, demonstrates that utilizing affinities among multiple samples are able to boost the clustering performance of conventional clustering algorithms. Accordingly, we define a tensor similarity for pairs which aims to provide the complementary information that a pairwise similarity missed. There are two types of tensor similarity: decomposable and indecomposable one. In the decomposable case, we connect the tensor similarity with a pairwise similarity, proving that the unfolding of the tensor similarity is equivalent to the Kronecker product of a pairwise similarity. For the indecomposable tensor similarity, empirical evidence shows that it captures the evident structure of data. Benefiting from this, we extract the high order similarity from the indecomposable tensor, which can serve as the complementary information for pairwise relationships. Finally, the extracted high order and pairwise similarity are fused for achieving accurate clustering performance. Extensive experiments demonstrate that the combination of the pairwise and high order similarities effectively achieves the accurate and robust cluster performance. The proposed IPS2 outperforms all baseline methods, especially in handling under-sampled and noisy dataset.

\ifCLASSOPTIONcompsoc
  \section*{Acknowledgments}
\else
  \section*{Acknowledgment}
\fi

This work was partially supported by the National Natural Science Foundation of China (61472145,61771007), Science and Technology Planning Project of Guangdong Province (2016B010127003), Health \& Medical Collaborative Innovation Project of Guangzhou City (201803010021).

\ifCLASSOPTIONcaptionsoff
  \newpage
\fi



%

\bibliographystyle{IEEEtran}
\bibliography{IPS2}
%

\end{document}